\newtheorem*{example}{Example 1}
\newtheorem{definition}{Definition}
\newtheorem{theorem}{Theorem}
\newtheorem{lemma}{Lemma}
\newtheorem{corollary}{Corollary}
\newcommand{\Z}{\mathbb{Z}}
\newcommand{\R}{\mathbb{R}}
\newcommand{\B}{\mathcal{B}}
\newcommand{\F}{\mathcal{F}}
\newcommand{\D}{\mathcal{D}}
\newcommand{\PPB}{\mathcal{P}(\mathcal{B})}
\newcommand{\PPBbar}{\overline{\mathcal{P}}(\mathcal{B})}
\newcommand{\SB}{\mathcal{S}(\mathcal{B})}
\newcommand{\CB}{\mathcal{C}(\mathcal{B})}
\newcommand{\CBONE}{\mathcal{C}^{1}_{\PPB}}
\newcommand{\CBZERO}{\mathcal{C}^{0}_{\PPB}}
\newcommand{\hx}{\hat{x}}
\newcommand{\hz}{\hat{z}}
\newcommand{\bias}{p}
\newcommand{\CN}{\tau_f}
\title{\LARGE \bf On the CVP for the root lattices \\
                          via folding with deep ReLU neural networks}
\author{Vincent Corlay$^{\dagger,*}$, Joseph J. Boutros$^{\ddagger}$, Philippe Ciblat$^{\dagger}$, and Lo\"ic Brunel$^*$\\
$^{\dagger}$ Telecom ParisTech, 46 Rue Barrault, 75013 Paris, France, v.corlay@fr.merce.mee.com\\
$^{\ddagger}$ Texas A\&M University, Doha, Qatar, $^*$Mitsubishi Electric R\&D Centre Europe, Rennes, France
}
\begin{document}

\maketitle
\thispagestyle{plain}
\pagestyle{plain}

\begin{abstract}
Point lattices and their decoding via neural networks are considered in this paper.
Lattice decoding in $\R^n$, known as the closest vector problem (CVP),
becomes a classification problem in the fundamental parallelotope
with a piecewise linear function defining the boundary.
Theoretical results are obtained by studying root lattices. 
We show how the number of pieces in the boundary function reduces dramatically with folding,
from exponential to linear. This translates into a two-layer ReLU network
requiring a number of neurons growing exponentially in $n$ to solve the CVP,
whereas this complexity becomes polynomial in $n$ for a deep ReLU network. 
\end{abstract}

\section{Introduction and motivations}
The objective  of this paper is  two-fold. Firstly, we introduce  a new
paradigm to  solve the CVP. This approach
enables to find efficient decoding algorithms for some dense lattices.
For  instance, such a neural network for the Gosset lattice
is a key component of a neural Leech decoder. Secondly, we also  aim at
contributing to the understanding of the efficiency of deep learning,
namely the expressive power of  deep neural networks.  As a result,
our goal is to present new decoding algorithms  and interesting
functions that can be efficiently computed by deep neural networks.

Deep Learning is  about two key aspects: (i) Finding  a function class
$\Phi=\{f\}$ that contains a function $f^*$ ``close enough'' to a target function.
(ii) Finding a learning algorithm $L$  for the class $\Phi$.
Of course the choices of (i) and (ii) can be either done jointly or
separately but in either case they impact each other.
Research  on the  expressive  power of  deep  neural networks  focuses
mostly on  (i)~\cite{Eldan2016}\cite{Raghu2016},
by studying  some specific functions contained  in the
function class of a network. Typically,  the aim is to show that there
exist functions that can be well approximated by a deep network with a
polynomial  number  of parameters  whereas  an  exponential number  of
parameters is  required for a shallow  network.
This  line of work leads to ``gap" theorems and ``capacity''
theorems for deep networks and it is similar to the  classical theory of
Boolean circuit complexity~\cite{Hastad1986}.

In  this  scope, several  papers  investigate  specifically deep  ReLU
networks \cite{Pascanu2013}\cite{Montufar2014}\cite{Raghu2016}\cite{Telgarsky2016}\cite{Safran2017}\cite{Arora2018}
(See \cite[Section~2.1]{Montufar2014} for a short introduction to ReLU neural networks).
Since a ReLU network computes a
composition  of   piecewise affine  functions,  all  functions   in  $\Phi$  are
continuous  piecewise  linear   (CPWL).
Hence,  the efficiency of $\Phi$ can be evaluated by checking whether
a CPWL function with a lot of affine pieces belongs to $\Phi$.
For example, there exists at least  one function  in
$\R^n$  with $\Omega  \left( \left(  w/n \right)^{L-1}  w^n \right)$
affine pieces that can be computed with a $w$-wide deep ReLU network
having $L$ hidden layers~\cite{Montufar2014}.
A two-layer network would need an exponential number of parameters for this same function.
In~\cite{Raghu2016}  they further
show that any random deep network achieves a similar exponential behavior.


Some results in the literature are established by considering elementary oscillatory functions (see e.g. Appendix~\ref{sec_montu})
or piecewise linear functions with regions of random shape.
It is not clear whether such types of functions may arise naturally in
computer science and engineering fields.



Our  work lies somewhere between \cite{Montufar2014}\cite{Telgarsky2016}
and \cite{Raghu2016}: our functions are neither elementary nor random.
We  discovered them  in  the  context of sphere packing and lattices
which are solution to many fundamental problems in number theory,
chemistry, communication theory, string theory, and cryptography \cite{Conway1999}.
Hence, in contrary to existing works, we do not search for a specific class  of functions to justify the success of
deep learning. We set technical problems, study the functions arising from lattices,
and show  that deep networks  are suited to  tackle them as  we obtain
similar  gap   theorems  between  shallow  and   deep  models.


\section{Main results}
A ReLU network with a finite number of neurons is not capable 
to infer infinite periodic functions (see Appendix~\ref{sec_montu}).
Hence, it cannot implement a simple modulo operation.
As a result, we allow a low complexity pre-processing of the point
to be decoded to obtain an equivalent position in the fundamental parallelotope $\PPB$
of the lattice.

\begin{enumerate}
\item Theorems~\ref{th_func_VR}\&\ref{th_func_VR_2} show that the decision boundary for the hyperplane logical decoder
(HLD) \cite{Corlay2018} is given by a continuous piecewise linear function,
for any lattice with a Voronoi-reduced basis.
Corollary~\ref{coro_SVR} guarantees the same result for a semi-Voronoi-reduced basis.
\item For the lattice $A_n$ with a basis defined by the Gram matrix~$\eqref{eq_basis_An}$
and a point in $\PPB$, Theorem~\ref{theo_nbReg_Lin} proves that the decision function has $\Omega(2^n)$ affine pieces.
\item Also for $A_n$, the number of pieces is reduced to $\mathcal{O}(n)$ after folding
as stated in Theorem~\ref{theo_An_linear}. 
\item Results of Section~\ref{sec_folding_An}~and~\ref{sec_folding_relu}, 
based on Theorem~\ref{theo_An_linear}, implies  that there exists a ReLU 
network of depth $\mathcal{O}\left(n^2\right)$ and width $\mathcal{O}\left(n^2\right)$
solving the CVP.
\item Theorem~\ref{theo_shallow} shows that a ReLU network with only one hidden layer needs  $\Omega\left(2^{n}\right)$ neurons to solve the CVP.
\end{enumerate}
Moreover, the theory presented in this paper is not limited to~$A_n$. 
It extends very well to other dense lattices.
Indeed, we already obtained similar results for all root lattices. 
They will however be presented in future communications due to lack of space. 
Finally, this paradigm seems not to be limited to lattices
as it may extend to binary block codes (see e.g. Figure~\ref{fig_parity} in Appendix~\ref{sec_montu}).

\section{Lattices and polytopes}
A lattice $\Lambda$ is a discrete additive subgroup of $\R^n$.
For a rank-$n$ lattice in $\R^n$, the rows of a $n\times n$ generator matrix $G$ constitute
a basis of $\Lambda$ and any lattice point $x$ is obtained via $x=zG$, where $z \in \Z^n$.
For a given basis $\mathcal{B}=\{ b_i \}_{i=1}^n$, $\PPB$ denotes
the fundamental parallelotope of $\Lambda$ and $\mathcal{V}(x)$ the Voronoi cell
of a lattice point $x$ \cite{Corlay2018}.
The minimum Euclidean distance of $\Lambda$ is $d_{min}(\Lambda)=2\rho$, where $\rho$ is the packing radius.

A vector $v \in \Lambda$ is called Voronoi vector if the half-space $\{y \in \mathbb{R}^{n} \ : \ y \cdot v \leq \frac{1}{2}v \cdot v \}$ 
has a non empty intersection with $\mathcal{V}(0)$. The vector is said relevant if the intersection is an $n-1$-dimensional face of $\mathcal{V}(0)$.
We denote by $\tau_{f}$ the number of relevant Voronoi vectors, referred to as the {\em Voronoi number} in the sequel.
For root lattices \cite{Conway1999}, the Voronoi number is equal to the kissing number $\tau$.
For random lattices, we typically have $\tau_{f}=2^{n+1}-2$.
The set $\CN(x)$, for $x \in \Lambda$, is the set of lattice points having a common Voronoi facet with $x$.
The next definition, introduced in \cite{Corlay2018}, is important for the rest of the paper.
\begin{definition}
\label{def_Voronoi-reduced}
Let $\B$ be the $\Z$-basis of a rank-$n$ lattice $\Lambda$ in~$\R^n$.
$\B$ is said Voronoi-reduced (VR) if, for any point $y \in \PPB$,
the closest lattice point $\hx$ to $y$ is one of the $2^n$ corners of $\PPB$,
i.e. $\hx=\hz G$ where $\hz \in \{0, 1\}^n$.
\end{definition}


Lattice decoding refers to finding the closest lattice point,
the closest in Euclidean distance sense.
This problem is also known as the closest vector problem.
The neural lattice decoder employs $\PPB$ as its main compact region \cite{Corlay2018},
thus it is important to characterize $\PPB$ as made below.

Let $\PPBbar$ be the topological closure of $\PPB$. A $k$-dimensional element of $\PPBbar\setminus \PPB$
is referred to as $k$-face of $\PPB$. There are $2^n$ 0-faces, called corners or vertices.
This set of corners is denoted $\mathcal{C}_{\PPB}$.
Moreover, the subset of $\mathcal{C}_{\PPB}$ obtained with $z_i=1$ is
$\mathcal{C}^1_{\PPB}$ and $\mathcal{C}^0_{\PPB}$ for $z_i=0$. 
The remaining faces of $\PPB$ are parallelotopes. For instance, a $n-1$-dimensional facet of $\PPB$,
say $\F_{i}$, is itself a parallelotope of dimension $n-1$ defined by $n-1$ vectors of $\B$. 
Throughout the paper, the term facet refers to a $n-1$-face. 

A convex polytope (or convex polyhedron) is defined as the intersection of a finite number of half-spaces bounded by hyperplanes \cite{Coxeter1973}:
\[
P_{o}=\{x \in \R^{n} : \ xA \leq b, \ A \in \R^{n \times m}, \ b \in \mathbb{R}^{m}\}.
\]
In this paper, we use not only parallelotopes but also simplices.
A $n$-simplex associated to $\mathcal{B}$ is given by
\begin{align*}
S(\mathcal{B})= \{ & y \in \R^{n} :  \ y= \sum_{i=1}^{n} \alpha_{i}b_{i},
\sum_{i=1}^{n} \alpha_{i} \leq1, \ \alpha_{i} \geq 0 \  \forall \ i   \}.
\end{align*}
It is clear that the corners of $S(\mathcal{B})$, the set $\mathcal{C}_{\SB}$,
are the $n+1$ points $\{0,b_{1}, ... ,b_{n}\}$. 


We say that a function $g : \mathbb{R}^{n-1} \rightarrow \mathbb{R}$ is continuous piecewise linear (CPWL) 
if there exists a finite set of polytopes covering $\mathbb{R}^{n-1}$ (which implies continuity),
and $g$ is affine over each polytope. 
The number of pieces of $g$ is the number of distinct polytopes partitioning its domain.

Finally, $\vee$ and $\wedge$ denote respectively the maximum and the minimum operator. 
We define a convex (resp. concave) CPWL function formed by a set of affine functions
related by the operator $\vee$ (resp. $\wedge$). 
If $\{g_{k}\}$ is a set of $K$ affine functions,
the function $f=g_{1} \vee ... \vee g_{K}$ is CPWL and convex.

\section{The decision boundary function}
\label{sec_dec_bound_func}
Given a VR basis, after translating the point to be decoded inside $\PPB$,
the HLD decoder proceeds in estimating each $z_i$-component separately.
The HLD computes the position of $y$ relative to a boundary via a Boolean equation
to guess whether $z_i=0$, i.e. the closest lattice point belongs to $\mathcal{C}^0_{\PPB}$,
or $z_i=1$ when the closest lattice point is in $\mathcal{C}^1_{\PPB}$.
This boundary cuts $\PPB$ into two regions.
It is composed of Voronoi facets of the corner points. 
The next step is to study the decision boundary function.
\textbf{Without loss of generality, the integer coordinate to be decoded is $z_1$}. 

We recall that a variable $u_{j}(y)$ in the Boolean equations of the HLD is obtained as:
\begin{equation}
\label{eq_bool}
u_{j}(y)=\text{sign}(y \cdot v_j - \bias_j) \in \{ 0, 1\},
\end{equation}
where $v_j$ is the orthogonal vector to the boundary hyperplane $\{ y \in \R^n : \ y \cdot v_j - \bias_j =0  \}$.
The latter contains the Voronoi facet of a point $x \in \CBONE$ and a point from $\CN(x) \cap \CBZERO$.
The decision boundary cutting $\PPB$ into two regions, with $\mathcal{C}^0_{\PPB}$ on one side
and $\mathcal{C}^1_{\PPB}$ on the other side, is the union of these Voronoi facets.
Each facet can be defined by an affine function over a compact subset of $\mathbb{R}^{n-1}$,
and the decision boundary is locally described by one of these functions. 

Let $\{ e_i \}_{i=1}^n$ be the canonical orthonormal basis of the vector space $\R^n$.
For $y \in \R^n$, the $i$-th coordinate is $y_i=y \cdot e_i$.
Denote $\tilde{y}=(y_2, \ldots, y_n) \in \R^{n-1}$ and let $\mathcal{H} = \{ h_j \}$ be the set of affine functions involved in the decision boundary.
The affine boundary function $h_{j}:\R^{n-1} \rightarrow \mathbb{R}$ is
\begin{equation}
\label{equ_hj}
h_{j}(\tilde{y})=y_{1}= \bigg( \bias_{j} -\sum_{k \neq 1} y_{k}v_{j}^{k} \bigg)/v_{j}^{1},
\end{equation}
where $v_{j}^{k}$ is the $k$-th component of vector $v_{j}$.
For the sake of simplicity, in the sequel $h_{j}$ shall denote the function defined
in (\ref{equ_hj}) or its associated hyperplane $\{ y \in \R^n : \ y \cdot v_j - \bias_j =0  \}$
depending on the context.
\begin{theorem}
\label{th_func_VR}
Consider a lattice defined by a VR basis $\B=\{b_i\}_{i=1}^n$.
Suppose that the $n-1$ points $\mathcal{B} \backslash \{ b_1\}$
belong to the hyperplane $\{y \in \R^n : \ y \cdot e_1 =0\}$.
Then, the decision boundary is given by a CPWL function
$f:\R^{n-1} \rightarrow~\R$, expressed as
\begin{equation}
\label{eq_boundary_funct}
f(\tilde{y}) = \wedge_{m=1}^{M}\{\vee_{k=1}^{l_m}g_{m,k}(\tilde{y}) \},
\end{equation}
where $g_{m,k} \in \mathcal{H}$, $1 \leq l_m< \tau_f  $, and  $1 \leq M \leq 2^{n-1}$.
\end{theorem}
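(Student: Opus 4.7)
My plan is to identify the decision boundary as the graph of a function of $\tilde{y}$, derive an explicit min--max formula from the closest-corner structure, and then prune the inner disjunction using the Voronoi-neighbor count.

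Under the hypothesis that $b_2,\ldots,b_n$ lie in the hyperplane $\{y:y\cdot e_1=0\}$, every corner in $\CBZERO$ has $y_1$-coordinate $0$ and every corner in $\CBONE$ has $y_1$-coordinate $b_1\cdot e_1$, which I may take positive without loss of generality. For any pair $(x^0,x^1)\in\CBZERO\times\CBONE$ and any $y=(y_1,\tilde{y})$, the map $y_1\mapsto|y-x^0|^2-|y-x^1|^2$ is affine in $y_1$ with positive slope $2\,b_1\cdot e_1$, so larger $y_1$ strictly favors $\CBONE$. Combined with the VR hypothesis (every $y\in\PPB$ is closest to a corner), this monotonicity forces exactly one transition $\CBZERO\to\CBONE$ along every line $\{(y_1,\tilde{y}):y_1\in\R\}$, so the decision boundary is the graph of a single-valued function $f:\R^{n-1}\to\R$.

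Next, because the corners in each $\mathcal{C}^i_{\PPB}$ share a common $y_1$-coordinate, the closest $\mathcal{C}^i_{\PPB}$-corner to $(y_1,\tilde{y})$ depends only on $\tilde{y}$; call it $x^i(\tilde{y})$. Solving $|y-x^0|^2=|y-x^1|^2$ for $y_1$ gives
\[
h_{x^0,x^1}(\tilde{y})=\frac{b_1\cdot e_1}{2}+\frac{|\tilde{y}-\tilde{x}^1|^2-|\tilde{y}-\tilde{x}^0|^2}{2\,b_1\cdot e_1},
\]
from which $\max_{x^0\in\CBZERO}h_{x^0,x^1}(\tilde{y})=h_{x^0(\tilde{y}),x^1}(\tilde{y})$ (the inner maximum selects the $\CBZERO$-corner with the smallest $|\tilde{y}-\tilde{x}^0|$) and then $\min_{x^1\in\CBONE}h_{x^0(\tilde{y}),x^1}(\tilde{y})=h_{x^0(\tilde{y}),x^1(\tilde{y})}(\tilde{y})=f(\tilde{y})$. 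Because the boundary point at $\tilde{y}$ lies on the common Voronoi facet of $x^0(\tilde{y})$ and $x^1(\tilde{y})$---and VR forces this facet to be corner-to-corner---every active piece belongs to $\mathcal{H}$. This yields $f=\wedge_{x^1\in\CBONE}\vee_{x^0\in\CBZERO}h_{x^0,x^1}$ with outer width $M\le|\CBONE|=2^{n-1}$.

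Finally, I would shrink the inner disjunction from all of $\CBZERO$ to $\CN(x^1)\cap\CBZERO$. Boundedness of $\mathcal{V}(x^1)$ in the $+e_1$ direction forces at least one of its $\tau_f$ relevant Voronoi vectors to have strictly positive $e_1$-component, which therefore cannot point to a $\CBZERO$ corner; hence $|\CN(x^1)\cap\CBZERO|\le\tau_f-1<\tau_f$. On the cell $\{\tilde{y}:x^1(\tilde{y})=x^1\}$ the optimizer $x^0(\tilde{y})$ automatically lies in $\CN(x^1)$ (the boundary point sits in $\PPB$ and VR forces its supporting facet to be corner-to-corner), so the pruned inner $\vee$ still equals $f(\tilde{y})$ there. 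The main obstacle is showing that for $\tilde{y}$ outside this cell the pruned $f_{x^1}(\tilde{y})$ remains $\ge f(\tilde{y})$, so that the outer $\wedge$ is not artificially lowered; this requires a careful VR-based argument that, whenever the line $\{(y_1,\tilde{y})\}$ enters $\mathcal{V}(x^1)$ inside $\PPB$, the entry facet comes from a $\CBZERO$-corner (so the pruned max captures the true lower boundary of $\mathcal{V}(x^1)$ within $\PPB$), together with a case analysis of the degenerate configurations where the line misses $\mathcal{V}(x^1)\cap\PPB$ entirely.
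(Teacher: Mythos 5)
Your first two stages are correct and in places sharper than the paper's own argument: with the stated orientation, $\CBZERO$ sits at height $0$ and $\CBONE$ at height $c=b_1\cdot e_1>0$, the difference of squared distances to any fixed pair of corners is affine in $y_1$ with slope $2c$, and the exact identity $f=\wedge_{x^1\in\CBONE}\vee_{x^0\in\CBZERO}h_{x^0,x^1}$ follows by direct algebra, since the inner $\vee$ and outer $\wedge$ are attained at the corners nearest to $\tilde{y}$. Note this is a genuinely different route from the paper, which works top-down from the HLD: there, the boundary is taken from the start to be the union of Voronoi facets between each $x\in\CBONE$ and $\CN(x)\cap\CBZERO$ (by Voronoi reducedness, the tied closest lattice points at a boundary point of $\PPB$ are corners with different $z_1$), the orientation hypothesis forces every facet normal to have $v_j^1>0$, which simultaneously gives single-valuedness of the boundary and writes each corner's facet cluster as a convex $\vee$ of functions in $\mathcal{H}$, and the logical OR over corners becomes the outer $\wedge$.

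There is, however, a genuine gap, and you have named it yourself: the pruning step. Your min--max identity ranges over \emph{all} pairs $(x^0,x^1)$, so its terms are generally not in $\mathcal{H}$ and the inner width is $2^{n-1}$ rather than $l_m<\tau_f$; the theorem claims the representation with neighbor-facet functions only. Replacing $\vee_{x^0\in\CBZERO}$ by $\vee_{x^0\in\CN(x^1)\cap\CBZERO}$ can only decrease each inner term, so you must prove the ``no-dip'' inequality: the pruned $f_{x^1}(\tilde{y})\ge f(\tilde{y})$ at every $\tilde{y}$ where $x^1$ is not the nearest $\CBONE$ corner. Your argument only yields $h_{x^0(\tilde{y}),x^1}(\tilde{y})\ge f(\tilde{y})$ with $x^0(\tilde{y})$ the globally nearest $\CBZERO$ corner, which need not lie in $\CN(x^1)$; and the standard relevant-vector fact (if $y\notin\mathcal{V}(x^1)$, some relevant neighbor of $x^1$ is strictly closer to $y$) does not close it, because that neighbor may lie in $\CBONE$ or outside $\mathcal{C}_{\PPB}$ altogether. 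So $f_{x^1}\ge f$ is exactly the missing lemma, and nothing in the proposal supplies it. (Your counting argument that $|\CN(x^1)\cap\CBZERO|\le\tau_f-1$, via a relevant Voronoi vector of strictly positive first coordinate, is correct and matches the paper's bound $l_m<\tau_f$.) The paper sidesteps this issue by inheriting the correctness of the HLD Boolean predicate from \cite{Corlay2018} --- the OR-of-ANDs is true precisely on the $z_1=1$ side, so no inactive convex term can dip below the boundary --- whereas your bottom-up construction must establish that fact explicitly; as written, you have proved a valid CPWL min--max representation of $f$, but not the representation asserted in the theorem.
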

In the next theorem, the orientation of the axes relative to $\B$
does not require $\{b_i\}_{i=2}^n$ to be orthogonal to $e_1$,
neither $b_1$ and $e_1$ to be collinear. 
\begin{theorem}
\label{th_func_VR_2}
Consider a lattice defined by a VR basis $\B=\{b_i\}_{i=1}^n$.
Without loss of generality, assume that $b_1^1>0$.
Suppose also that $x_1 > \lambda_1$, $\forall x \in \CBONE$ and $\forall \lambda \in \CN(x) \cap \CBZERO$.
Then, the decision boundary is given by a CPWL function as in (\ref{eq_boundary_funct}).
\end{theorem}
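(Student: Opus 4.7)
The plan is to reuse the skeleton of the proof of Theorem~\ref{th_func_VR}, isolating the two geometric properties of each separating hyperplane that its argument actually invokes: (i) the hyperplane is non-vertical in the $y_1$-direction, so it can be written as the graph of an affine function of $\tilde y$, and (ii) the $\CBONE$-side of the hyperplane lies above its $\CBZERO$-side in the $y_1$ coordinate. In Theorem~\ref{th_func_VR} both properties are inherited from the special orientation of the basis; here, they are supplied directly by the assumption $x_1 > \lambda_1$, so very little additional work is required beyond a careful verification of signs.

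First I would observe that for every $x \in \CBONE$ and every $\lambda \in \CN(x) \cap \CBZERO$, the orthogonal vector $v_j = x - \lambda$ of the corresponding bisector satisfies $v_j^1 = x_1 - \lambda_1 > 0$. Hence (\ref{equ_hj}) is well defined, and the Voronoi half-space $\{ y : y \cdot v_j \geq \bias_j \}$ of $x$ against $\lambda$ is precisely the epigraph $\{ y_1 \geq h_j(\tilde y) \}$. Next I would use the VR hypothesis to reduce the decision event $\{ z_1(y) = 1 \}$ to a finite Boolean formula: since the nearest lattice point of any $y \in \PPB$ is a corner of $\PPB$, $z_1(y) = 1$ holds iff some $x \in \CBONE$ lies closer to $y$ than every $\lambda \in \CBZERO$, a condition that, by the relevance of Voronoi neighbors, only needs to be checked for $\lambda \in \CN(x) \cap \CBZERO$. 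Combined with the previous step, this rewrites as
\begin{equation*}
y_1 \ \geq \ \bigwedge_{x \in \CBONE} \bigvee_{\lambda \in \CN(x) \cap \CBZERO} h_{x,\lambda}(\tilde y),
\end{equation*}
so the decision boundary is the graph of
\begin{equation*}
f(\tilde y) = \bigwedge_{x \in \CBONE} \bigvee_{\lambda \in \CN(x) \cap \CBZERO} h_{x,\lambda}(\tilde y),
\end{equation*}
which has the claimed form~(\ref{eq_boundary_funct}). The bound $M \leq 2^{n-1}$ follows from $|\CBONE| = 2^{n-1}$ (with dominated outer blocks dropped from the wedge), while $l_m < \tau_f$ follows because at least one relevant Voronoi neighbor of $x_m$ is not in $\CBZERO$, namely the neighbor of $x_m$ in the positive $e_1$-direction, which lies in $\CBONE$ or outside $\PPB$. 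Finally, $f$ is CPWL as a finite min-max of affine functions.

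The step I expect to be the main obstacle is justifying the reduction to bisectors against $\CN(x) \cap \CBZERO$ rather than all of $\CBZERO$: one must argue that a $\CBZERO$-corner which is not a relevant Voronoi neighbor of $x$ contributes a half-space constraint that is already implied by those coming from relevant neighbors on the domain where the outer min-max is active. This is the standard local-Voronoi redundancy used in Theorem~\ref{th_func_VR}, but its transfer to the tilted-basis setting of Theorem~\ref{th_func_VR_2} must be handled with care, and it is again the VR hypothesis that keeps the analysis within the finite corner set $\mathcal{C}_{\PPB}$ and thereby makes the whole min-max representation finite.
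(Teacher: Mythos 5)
Your proof is correct and follows essentially the same route as the paper's: the hypothesis $x_1 > \lambda_1$ forces every facet normal $v_j = x - \lambda$ to satisfy $v_j^1 > 0$, so each Voronoi half-space becomes the epigraph $\{y_1 \geq h_j(\tilde y)\}$, and the HLD's OR-of-ANDs Boolean structure translates logically into the $\wedge$-of-$\vee$ representation (\ref{eq_boundary_funct}), with the same accounting for $M \leq 2^{n-1}$ and $l_m < \tau_f$. The reduction to $\CN(x) \cap \CBZERO$ that you flag as the main obstacle is not re-derived in the paper either---the decision boundary is defined in Section~\ref{sec_dec_bound_func} as the union of exactly these Voronoi facets, inherited from the HLD of \cite{Corlay2018}---so no extra work is needed there; the only element of the paper's proof you omit is the converse observation (a negative $v_j^1$ would make a vertical line cross the boundary twice, so the boundary would not be a function), which the theorem as stated does not require.
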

See Appendix~\ref{App_theo_f_CPWL} for the proofs.
Some interesting lattices may not admit a VR basis, e.g. see $E_6$ in~\cite{Corlay2018}.
In this case, if $\text{Vol}(\PPB \setminus \cup_{x \in \CB} \mathcal{V}(x)) \ll \text{Vol}(\PPB)$
then HLD yields efficient decoding.
A basis satisfying this condition is called {\em quasi-Voronoi-reduced}.
The new definition below presumes that $\B$ is quasi-Voronoi-reduced in order
to make a successful discrimination of $z_1$ via the boundary function.
Also, a surface in $\R^n$ defined by a function $g$ of $n-1$ arguments is written as
$\text{Surf}(g)=\{ (g(\tilde{y}), \tilde{y}) \in \R^n : \tilde{y} \in \R^{n-1} \}$. 
\begin{definition}
\label{def_semi-Voronoi-reduced}
Let $\B$ be a basis of $\Lambda$. Assume that  $\B$ and $\{ e_i \}_{i=1}^n$ have the same
orientation as in Theorem~\ref{th_func_VR}. The basis is called
semi-Voronoi-reduced (SVR) if there exists at least two points $x_1, x_2 \in \CBONE$
such that 
$\text{Surf}(\vee_{k=1}^{\ell_1}g_{1,k}) \bigcap \text{Surf}(\vee_{k=1}^{\ell_2}g_{2,k})
\ne \varnothing$, where $\ell_1,\ell_2\geq 1$,
$g_{1,k}$ are the facets between $x_1$ and all points in $\CN(x_1) \cap \CBZERO$,
and $g_{2,k}$ are the facets between $x_2$ and all points in $\CN(x_2) \cap \CBZERO$.
\end{definition}
The above definition of a SVR basis imposes that the boundaries around two points of $\CBONE$,
defined by the two convex functions $\vee_{k=1}^{\ell_m}g_{m,k}$, $m=1,2$,
have a non-empty intersection.
Consequently, the min operator $\wedge$ leads to a boundary function as in (\ref{eq_boundary_funct}).
\begin{corollary}
\label{coro_SVR}
$\PPB$ for a SVR basis $\B$ admits a decision boundary defined by a CPWL function as in (\ref{eq_boundary_funct}).
\end{corollary}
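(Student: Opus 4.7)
The plan is to mimic the construction in the proof of Theorem~\ref{th_func_VR} corner by corner, and then glue the local pieces together using the SVR intersection hypothesis. First, I fix any corner $x_m\in\CBONE$ that has at least one neighbour in $\CN(x_m)\cap\CBZERO$ and collect the affine hyperplanes $g_{m,1},\ldots,g_{m,\ell_m}\in\mathcal{H}$ associated, via~(\ref{equ_hj}), with the Voronoi facets separating $x_m$ from those neighbours. Since Definition~\ref{def_semi-Voronoi-reduced} inherits the orientation of Theorem~\ref{th_func_VR} (so $b_2,\ldots,b_n\perp e_1$), each hyperplane is the graph of a single-valued affine function of $\tilde y\in\R^{n-1}$, and the portion of $\partial\mathcal{V}(x_m)$ facing $\CBZERO$ is the graph of the convex CPWL function $G_m(\tilde y)=\bigvee_{k=1}^{\ell_m}g_{m,k}(\tilde y)$.

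Second, I would assemble these local graphs into a single decision surface. Starting from any corner of $\CBONE$ and moving downward in $y_1$, the HLD flips its estimate from $z_1=1$ to $z_1=0$ on the highest local envelope it encounters; globally, the decision boundary is therefore
\begin{equation*}
f(\tilde y)=\bigwedge_{m=1}^{M}G_m(\tilde y)=\bigwedge_{m=1}^{M}\bigg(\bigvee_{k=1}^{\ell_m}g_{m,k}(\tilde y)\bigg),
\end{equation*}
which is exactly the form~(\ref{eq_boundary_funct}). The counting bounds follow immediately: $\ell_m<\tau_f$ because at least one facet of $\mathcal{V}(x_m)$ must face a neighbour in $\CBONE$ rather than in $\CBZERO$, and $M\le|\CBONE|=2^{n-1}$.

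The main obstacle is the continuity of $f$. In the VR setting of Theorem~\ref{th_func_VR} continuity follows from $\PPB=\bigcup_{x\in\CB}\mathcal{V}(x)$, which forces neighbouring envelopes $G_m$ to meet along shared edges. For a merely SVR basis this equality fails on the small quasi-VR remainder $\PPB\setminus\bigcup_{x\in\CB}\mathcal{V}(x)$, so a priori the convex caps could be pairwise disjoint and the $\wedge$ would degenerate into selecting a single $G_m$. Definition~\ref{def_semi-Voronoi-reduced} is designed precisely to rule this out: it guarantees $\text{Surf}(G_{m_1})\cap\text{Surf}(G_{m_2})\ne\varnothing$ for at least one pair, so the $\wedge$ is nontrivially active there. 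Iterating the same argument for every pair of adjacent corners in $\CBONE$ whose caps admit a common boundary yields a connected continuous CPWL surface of the required form, completing the corollary.
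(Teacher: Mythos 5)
Your construction of the local convex caps $G_m=\vee_{k=1}^{\ell_m}g_{m,k}$ and the global formula $f=\wedge_{m}G_m$ is exactly the paper's route: the corollary is disposed of there in one remark, by noting that Definition~\ref{def_semi-Voronoi-reduced} inherits the orientation of Theorem~\ref{th_func_VR}, so every normal $v_j$ to a facet between $x\in\CBONE$ and $\CN(x)\cap\CBZERO$ has $v_j^1=b_1^1>0$, each cap is single-valued over $\tilde{y}$, and the HLD's OR-of-ANDs translates verbatim into the event $y_1>\wedge_m\vee_k g_{m,k}(\tilde{y})$, exactly as in the proof of Theorem~\ref{th_func_VR}. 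Up to and including your counting bounds you are on that track, with one small slip: the facet of $\mathcal{V}(x_m)$ opposite one counted in $G_m$ faces the neighbour $2x_m-\lambda$, whose first coordinate is $2b_1^1$; that neighbour is therefore not in $\CBZERO$ (which is all that is needed for $\ell_m<\tau_f$), but it need not lie in $\CBONE$ either --- it is generally outside $\PPB$ altogether.

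The genuine flaw is your final paragraph, which misidentifies the obstacle. Continuity of $f$ is never in jeopardy: each $g_{m,k}$ in (\ref{equ_hj}) is an affine function defined on all of $\R^{n-1}$, so each $G_m$ is a globally defined convex CPWL function and $\wedge_m G_m$ is automatically continuous and CPWL on $\D$, whether or not the graphs $\text{Surf}(G_m)$ intersect; even if the $\wedge$ ``degenerated'' to a single $G_m$, the result would still have the form (\ref{eq_boundary_funct}). So the SVR hypothesis is not rescuing continuity, and your repair --- ``iterating the same argument for every pair of adjacent corners'' --- claims more than Definition~\ref{def_semi-Voronoi-reduced} grants: the definition posits the existence of at least \emph{one} pair $x_1,x_2$ with intersecting surfaces, not all adjacent pairs, and your hedge ``whose caps admit a common boundary'' makes the iteration circular. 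What the intersection condition actually buys, per the paper's remark preceding the corollary, is that the $\wedge$ genuinely glues boundaries around distinct points of $\CBONE$ into a composite decision boundary for the quasi-VR basis, rather than guaranteeing the formula is well defined. Your argument still reaches the corollary, but only because its first half (caps plus min under the inherited orientation) already suffices; the paragraph you present as ``completing'' the proof is both unnecessary and unsound as stated.
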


\begin{figure}
    \centering
    \includegraphics[scale=0.4]{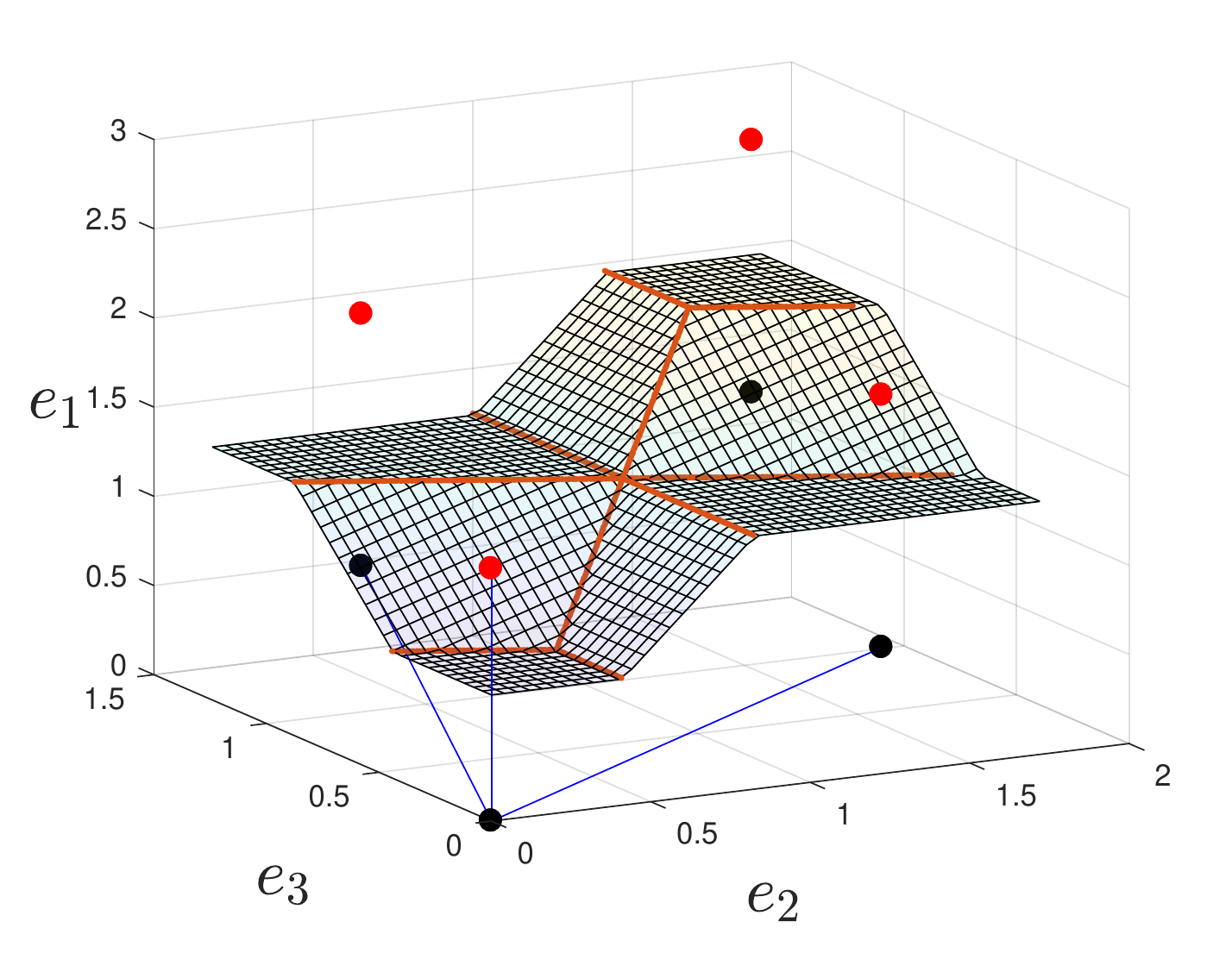}
     \caption{CPWL decision boundary function for $A_3$. 
                   The basis vectors are represented by the blue lines. 
                   The corner points in $\CBONE$ are in red and the corner points in $\CBZERO$ in black.}
     \label{fig_func_A3}
\vspace{-4mm}
\end{figure}
\begin{figure}
    \centering
    \includegraphics[scale=0.4]{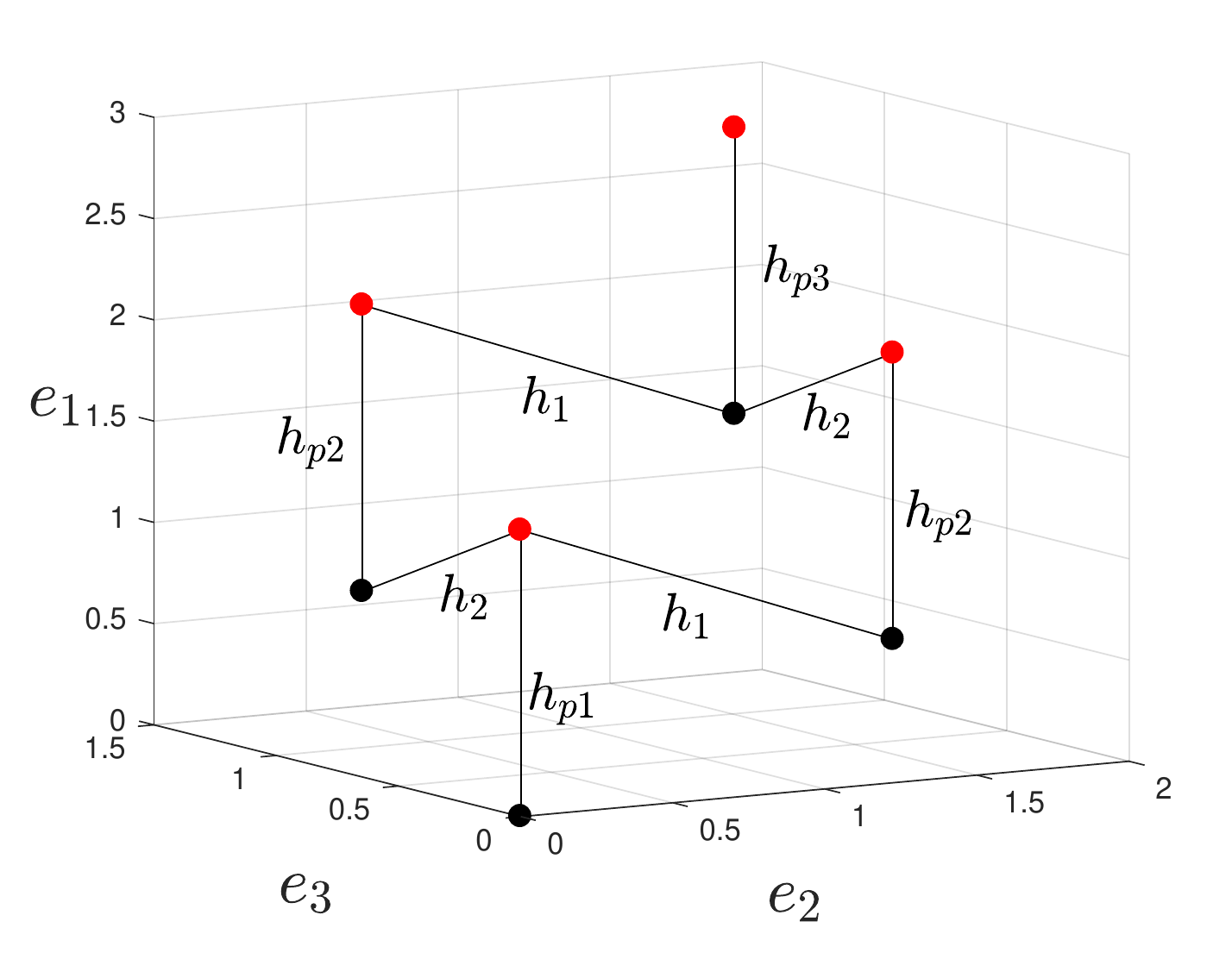}
     \caption{``Neighbor" figure of $\mathcal{C}_{\PPB}$ for $A_3$. Each edge connects a point $x \in \CBONE$ to an element of $\CN(x) \cap \CBZERO$.
		        The $i$ edges connected to a point $x \in \CBONE$ are $1$-faces of a regular $i$-simplex.}
     \label{fig_simplex_A3}
\vspace{-3mm}
\end{figure}
\begin{example}
\label{ex_A3}
Consider the lattice $A_3$ defined by the Gram matrix~\eqref{eq_basis_An}. 
To better illustrate the symmetries we rotate the basis to have $b_1$ collinear with $e_1$. 
Theorem~\ref{th_func_VR_2} ensures that the decision boundary is a function.
The function is illustrated on Figure~\ref{fig_func_A3} and its equation is (we omit the $\tilde{y}$ in the formula to lighten the notations):
\begin{align*}
\label{eq_A3}
f = &\Big[ h_{p_1} \vee h_{1} \vee h_{2} \Big] \wedge \Big[ \left( h_{p_2} \vee h_{1}\right)  \wedge \left( h_{p_2}\vee h_{2} \right) \Big] \wedge  \Big[ h_{p_3} \Big], 
\end{align*}
where $h_{p_1}$, $h_{p_2}$ and $h_{p_3}$ are hyperplanes orthogonal to $b_1$ (the $p$ index stands for plateau). 
On Figure~\ref{fig_simplex_A3} each edge is orthogonal to a local affine function of $f$ and labeled accordingly.
The $[ \cdot ]$ groups all the set of convex pieces of $f$ that includes the same $h_{pj}$. 
Functions for higher dimensions are available in Appendix~\ref{terms_An}.
\end{example}

From now on, {\bf the default orientation of the basis with respect to the
canonical axes of $\R^n$ is assumed to be the one of Theorem~\ref{th_func_VR}}.
We call $f$ the decision boundary function.
The domain of $f$ (its input space) is $\D \subset \mathbb{R}^{n-1}$.
The domain $\mathcal{D}$ is the projection of $\PPB$ on the hyperplane $\{e_i\}_{i=2}^n$.
It is a bounded polyhedron that can be partitioned into convex (and thus connected) regions which we call linear regions. 
For any $\tilde{y}$ in one of these regions,
$f$ is described by a unique local affine function $h_{j}$. 
The number of those regions is equal to the number of affine pieces of $f$. 
\section{Folding-based neural decoding of $A_n$}
In this section, we first prove that the lattice basis
from the $n \times n$ Gram matrix in (\ref{eq_basis_An}) is VR 
(all bases are equivalent modulo rotations and reflections).
We count the number of pieces of the decision boundary function.
We then build a deep ReLU network which computes efficiently this function via $folding$.
Finally, we use the fact that the $n-2$-dimensional hyperplanes
partitioning $\D$ are {\em not} in ``general position" in order to prove
that a two-layer network needs an exponential number of neurons to compute the function.

Consider a basis for the lattice $A_n$ with all vectors from the first lattice shell.
Also, the angle between any two basis vectors is $\pi/3$. Let $J_n$ denote the $n \times n$ all-ones matrix and $I_n$ the identity matrix.
The Gram matrix is
\begin{equation}
\label{eq_basis_An}
\Gamma=GG^T= J_n+I_n.
\end{equation}

\begin{theorem}
\label{theo_An}
A lattice basis defined by the Gram matrix~\eqref{eq_basis_An} is Voronoi-reduced.
\end{theorem}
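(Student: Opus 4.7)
The plan is to embed $A_n$ in $\R^{n+1}$ as the sublattice $\{z \in \Z^{n+1} : \sum_i z_i = 0\}$ with basis $b_i = e_i - e_{n+1}$ for $i=1,\ldots,n$; a direct computation confirms the Gram matrix $b_i \cdot b_j = \delta_{ij} + 1$, equal to $J_n + I_n$. Any $y \in \PPB$ then takes the form $y = (\alpha_1,\ldots,\alpha_n,-\sum_i \alpha_i)$ with $\alpha \in [0,1]^n$, and the $2^n$ corners of $\PPB$ correspond to subsets $S \subseteq \{1,\ldots,n\}$ via $c_S = (\mathbf{1}_S,-|S|)$. Proving that $\B$ is VR thus reduces to showing that for every such $y$, the lattice point minimizing $\|y-x\|$ is one of the $c_S$.

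The main tool I would invoke is the Conway--Sloane closest-point decoder for $A_n$: round each coordinate $y_i$ to its nearest integer $f_i$, compute $\Delta = \sum_i f_i$, and if $\Delta \neq 0$ restore the zero-sum constraint by flipping the $|\Delta|$ coordinates with the most extreme residuals $\delta_i = y_i - f_i$ by $\pm 1$, the sign chosen to reduce $|\Delta|$. For $y \in \PPB$ the initial rounding gives $f_i \in \{0,1\}$ for $i \leq n$; setting $k = |\{i \leq n : f_i = 1\}|$, $D = \sum_{i \leq n}\delta_i$, and $j$ the nearest integer to $D$, one finds $f_{n+1} = -k-j$ and $\Delta = -j$.

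If $j=0$ no adjustment is needed and $f$ is already the corner $c_S$ with $S=\{i : f_i = 1\}$. For $j>0$ (the case $j<0$ being symmetric) the algorithm flips $j$ coordinates of largest $\delta_i$ by $+1$. The key observation is that, for $i \leq n$, one has $\delta_i > 0$ exactly when $f_i = 0$, so such a flip sends $x_i$ to $1 \in \{0,1\}$, while residuals with $f_i = 1$ are non-positive and hence never enter the top $j$ as long as enough positive candidates are available. Letting $p$ denote the number of indices $i \leq n$ with $\delta_i > 0$ and $P$ their sum, the bounds $D \leq P < p/2$ and $D \geq j - 1/2$ force $p \geq 2j$. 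The residual $\delta_{n+1}$ may itself belong to the top $j$; if so, one flip is applied to $x_{n+1}$ and only $j-1$ of the $i \leq n$ coordinates are affected, still drawn from the $p \geq 2j$ pool. Either way the resulting $x$ satisfies $x_i \in \{0,1\}$ for $i \leq n$ and $x_{n+1} = -\sum_{i \leq n} x_i$ by the zero-sum constraint, identifying $x$ with a corner of $\PPB$.

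The hard part is precisely the counting bound $p \geq 2j$ together with the case split on whether $\delta_{n+1}$ enters the top $j$; once these are established, the symmetric case $j<0$ relies on the dual fact that the most negative residuals for $i \leq n$ occur at $f_i = 1$ and become $x_i = 0$ after a $-1$ flip. Boundary ties where some $\alpha_i$ equals $0$, $1/2$, or $1$ can be handled by a fixed rounding convention, which places such $y$ at the intersection of two adjacent corner Voronoi cells without affecting the VR property.
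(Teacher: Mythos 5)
Your proof is correct, but it takes a genuinely different route from the paper's. The paper argues geometrically inside the parallelotope: it shows that no Voronoi cell $\mathcal{V}(x)$ with $x \in \Lambda \setminus \mathcal{C}_{\PPB}$ can cross a facet of $\PPBbar$, by identifying the nearest exterior lattice point whose projection lands in the facet $\mathcal{F}_1$ (namely $-b_1+\sum_{j= 2}^{n} b_j$, after translating within the hyperplane parallel to $\mathcal{F}_1$), observing that this point together with $\mathcal{B}\setminus \{b_1\}$ forms a regular simplex, and concluding that the relevant Voronoi vertex (the hole at the simplex centroid) sits at distance $\alpha/(n+1)$, $\alpha > 0$, from the facet, so the exterior cell never penetrates $\PPBbar$; by symmetry one facet suffices. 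You instead work in the standard zero-sum embedding of $A_n$ in $\R^{n+1}$ and run the exact Conway--Sloane decoder on an arbitrary $y \in \PPB$, showing via the residual count $p \geq 2j$ that the $\pm 1$ corrections only ever move coordinates within $\{0,1\}$ (with the $\delta_{n+1}$ case split handled correctly), so the decoder's output is always a corner $c_S$. Your argument is more computational but arguably more airtight than the paper's rather terse geometric sketch, and it verifies the claim on all of $\PPB$ at once; its one load-bearing external fact is the exactness of the Conway--Sloane closest-point algorithm for $A_n$, which you should cite explicitly \cite[Chap.~20]{Conway1999} --- a reference the paper already uses. Two small points to tighten: the strict bound $P < p/2$ depends on the rounding convention (with round-half-down you only get $P \leq p/2$, hence $p \geq 2j-1$), but since the algorithm needs only $j$ (or $j-1$, when $\delta_{n+1}$ is selected) strictly positive candidates, $p \geq j$ already suffices and follows under either convention; and ``$\delta_i > 0$ exactly when $f_i = 0$'' should be stated as the one-sided implications $\delta_i > 0 \Rightarrow f_i = 0$ and $f_i = 1 \Rightarrow \delta_i \leq 0$ (ties $\delta_i = 0$ occur at $\alpha_i = 0$), which is all your argument actually uses.
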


See Appendix~\ref{App_theo_VR_An} for the proof. 
Consequently, discriminating a point in $\PPB$
with respect to the decision boundary leads to an optimal decoder. 

\subsection{Number of pieces of the decision boundary function}

We count the number of pieces, and thus linear regions,
of the decision boundary function $f$.
We start with the following lemma involving $i$-simplices.


\begin{lemma}
\label{lem_simplex}
Consider an $A_n$-lattice basis defined by the Gram matrix~\eqref{eq_basis_An}.
The decision boundary function $f$ has a number of affine pieces equal to
\begin{align}
\sum_{i=0}^{n}i \times ( \text{$\#$ regular $i$-simplices}),
\end{align}
where, for each $i$-simplex, only one corner $x$ belongs to $\CBONE$
and the other corners constitute the set $\CN(x) \cap \CBZERO$.
\end{lemma}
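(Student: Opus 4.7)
My plan is to count the affine pieces of $f$ by first putting them in bijection with pairs $(x,\lambda)$ where $x \in \CBONE$ and $\lambda \in \CN(x) \cap \CBZERO$, and then regrouping these pairs according to the simplices they form around each apex $x$. By Theorem~\ref{th_func_VR} together with the VR property established in Theorem~\ref{theo_An}, every affine piece of $f$ comes from a single hyperplane $h_j \in \mathcal{H}$, which itself contains the Voronoi facet $\mathcal{V}(x) \cap \mathcal{V}(\lambda)$ of exactly one such pair. Distinct pairs give distinct hyperplanes, since their normals $\lambda-x$ differ, and under the symmetry of the basis~\eqref{eq_basis_An} every such facet has a non-empty intersection with $\PPB$ and therefore contributes one linear region to the partition of $\D$. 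Hence the number of pieces of $f$ equals $\sum_{x \in \CBONE}|\CN(x) \cap \CBZERO|$.

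The heart of the argument is to show that for every $x \in \CBONE$ the set $\{x\} \cup (\CN(x) \cap \CBZERO)$ is a regular simplex of edge length $d_{min}(A_n)=\sqrt{2}$. I would write each corner through its $\{0,1\}$-coordinates in $\B$ and, using $b_i \cdot b_i = 2$ and $b_i \cdot b_j = 1$ for $i \neq j$ from~\eqref{eq_basis_An}, expand $\|x-\lambda\|^2$ with $A = S_x \setminus S_\lambda$ and $B = S_\lambda \setminus S_x$ to obtain $(|A|-|B|)^2 + |A| + |B|$. Forcing this to equal $d_{min}^2 = 2$ together with the constraint $1 \in A$ (imposed by $x \in \CBONE$ and $\lambda \in \CBZERO$) leaves only $(|A|,|B|) \in \{(1,0),(1,1)\}$, so every $\lambda \in \CN(x) \cap \CBZERO$ has the form $x-b_1$ or $x-b_1+b_j$ for some $j \in \{2,\dots,n\} \setminus S_x$. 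A direct inspection of the pairwise differences of these $\lambda$'s (each being $\pm b_j$ or $b_{j_1}-b_{j_2}$, all of squared norm $2$) then shows that any two such neighbors are again at mutual distance $\sqrt{2}$. This simultaneously establishes that $\{x\} \cup (\CN(x) \cap \CBZERO)$ is a regular $i$-simplex with $i = |\CN(x) \cap \CBZERO|$ and that, conversely, every regular $i$-simplex of the type described in the lemma arises from exactly one such apex.

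Finally, regrouping the sum $\sum_{x \in \CBONE}|\CN(x) \cap \CBZERO|$ by the common value $i$ and invoking the bijection just established yields the claimed formula $\sum_{i=0}^{n} i \times (\#\text{regular $i$-simplices})$, the $i=0$ term vanishing. The main obstacle is not the counting itself but the regularity claim, namely that the elements of $\CN(x) \cap \CBZERO$ lie pairwise at the minimum distance from \emph{each other} and not only from $x$. This property is specific to $A_n$ and uses crucially that the basis vectors of~\eqref{eq_basis_An} are themselves minimum vectors meeting at equal angles $\pi/3$, so that differences of the form $b_{j_1}-b_{j_2}$ remain of norm $\sqrt{2}$.
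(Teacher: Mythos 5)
Your overall route is the same as the paper's: identify the affine pieces of $f$ with the Voronoi facets between an apex $x \in \CBONE$ and its neighbors in $\CN(x) \cap \CBZERO$, show that each apex together with these neighbors forms a regular simplex, and regroup the resulting sum $\sum_{x \in \CBONE} |\CN(x) \cap \CBZERO|$ by simplex dimension. Where you add value is on the structural step: the paper merely \emph{asserts} the key property \eqref{eq_prop}, whereas your Gram-matrix computation $\|x-\lambda\|^2 = (|A|-|B|)^2 + |A| + |B|$ --- legitimate because for the root lattice $A_n$ the relevant Voronoi vectors are exactly the minimal vectors, so $\CN(x) \cap \CBZERO$ consists of the corners at distance $\sqrt{2}$ --- actually proves it, and your pairwise check on the differences $b_j$ and $b_{j_1}-b_{j_2}$ establishes the regularity that the paper leaves implicit. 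Your classification $\lambda \in \{x-b_1\} \cup \{x-b_1+b_j : j \in \{2,\dots,n\}\setminus S_x\}$ is precisely what drives the paper's subsequent count in Theorem~\ref{theo_nbReg_Lin}.

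One supporting claim in your first paragraph is false as stated: distinct pairs do \emph{not} always give distinct hyperplanes, and their normals $\lambda-x$ need not differ. Take $(x,\lambda)=(b_1+b_2,\,b_2)$ and $(x',\lambda')=(b_1+b_3,\,b_3)$: both normals equal $-b_1$, and using $b_i\cdot b_i=2$, $b_i\cdot b_j=1$, both bisector hyperplanes are $\{y : y\cdot b_1 = 2\}$ --- the same hyperplane. This is visible in the paper's own $A_3$ example, where the single affine function $h_{p_2}$ (and likewise $h_1$ and $h_2$) serves two different convex groups: there are only five distinct affine functions but eight pieces, in agreement with \eqref{eq_An_pieces}. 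What saves your count is that a piece, per the paper's definition, is a \emph{region} of the partition of $\D$, not a distinct affine function: the facets $\mathcal{V}(x)\cap\mathcal{V}(\lambda)$ of distinct pairs have disjoint relative interiors (a point in the relative interior of such a facet has exactly $\{x,\lambda\}$ as its set of closest corners), so each pair contributes its own linear region even when several facets lie in a common hyperplane. Replace the distinct-normals sentence by this disjoint-interiors observation and your argument is sound; as written, it conflates pieces with distinct hyperplanes --- a distinction that matters, since the paper itself must work to lower-bound the number of \emph{distinct} hyperplanes separately in the proof of Theorem~\ref{theo_shallow}.
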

\begin{proof}
A key property of this basis is 
\begin{gather}
\label{eq_prop}
\begin{split}
&\forall x \in~\CBZERO, \ x' \in A_n \backslash \{b_j,0\},\ 2 \leq j\leq n: \\
&x+ b_j \in \CN(x+b_1), \ x+ x' \not\in \CN(x+b_1) \cap \CBZERO.
\end{split}
\end{gather}
It is obvious that $\forall x \in~\CBZERO$: $x + b_1 \in~\CBONE$. 
This implies that any given point $x \in \mathcal{C}^1_{\PPB}$
and its neighbors $\CN(x)~\cap~\CBZERO$ form a regular simplex
$\mathcal{S}$ of dimension $|\CN(x) \cap \CBZERO|$. 
This clearly appears on Figure~\ref{fig_simplex_A3}.
Now, consider the decision boundary function of a $i$-simplex separating the top corner (i.e. $\mathcal{C}^1_{\mathcal{S}}$) from all the other corners  (i.e. $\mathcal{C}^0_{\mathcal{S}}$). 
This function is convex and has $i$ pieces. 
The maximal dimensionality of such simplex is obtained
by taking the points 0, $b_1$, and the $n-1$ points $b_j$, $j\ge 2$.
\end{proof}

\begin{theorem}
\label{theo_nbReg_Lin}
Consider an $A_n$-lattice basis defined by the Gram matrix~\eqref{eq_basis_An}.
The decision boundary function $f$ has a number of affine pieces equal to
\begin{equation}
\label{eq_An_pieces}
\sum_{i=1}^{n}i \times \binom{n-1}{n-i}.
\end{equation}
\end{theorem}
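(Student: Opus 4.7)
The plan is to reduce to Lemma~\ref{lem_simplex} and then enumerate the relevant simplices by counting how many neighbors in $\CBZERO$ each corner $\xi\in\CBONE$ possesses. Lemma~\ref{lem_simplex} guarantees that the number of affine pieces of $f$ equals $\sum_{i=0}^{n} i\cdot N_i$, where $N_i$ is the number of regular $i$-simplices whose top corner sits in $\CBONE$ and whose remaining corners form $\CN(\xi)\cap\CBZERO$. Moreover, the proof of that lemma established that for each $\xi\in\CBONE$ the set $\{\xi\}\cup(\CN(\xi)\cap\CBZERO)$ already forms such a simplex, so the enumeration collapses to computing $|\CN(\xi)\cap\CBZERO|$ for every $\xi\in\CBONE$ and summing.

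First I would parametrize $\CBONE$. Every $\xi\in\CBONE$ can be written uniquely as $\xi=b_{1}+\sum_{j\in S}b_{j}$ with $S\subseteq\{2,\ldots,n\}$; equivalently $\xi=x+b_1$ for $x=\sum_{j\in S}b_j\in\CBZERO$. Applying the key property~\eqref{eq_prop} used in Lemma~\ref{lem_simplex}, the candidates for neighbors of $\xi$ inside $\CBZERO$ are precisely $x$ itself and the points $x+b_j$ with $2\leq j\leq n$. I then need to retain only those that actually lie in $\CBZERO$: the corner $x+b_j$ belongs to $\CBZERO$ iff the $j$-th integer coordinate of $x$ is $0$, that is iff $j\notin S$. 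Consequently
\[
|\CN(\xi)\cap\CBZERO|\;=\;1+(n-1-|S|)\;=\;n-|S|,
\]
and the associated simplex has dimension $n-|S|$.

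Next I would count. The number of corners $\xi\in\CBONE$ with $|S|=k$ is $\binom{n-1}{k}$, since we choose $k$ indices out of $\{2,\ldots,n\}$. Each such $\xi$ contributes, via Lemma~\ref{lem_simplex}, exactly $n-k$ affine pieces to $f$. Summing over $k$ and reindexing with $i=n-k$ yields
\[
\sum_{k=0}^{n-1}(n-k)\binom{n-1}{k}\;=\;\sum_{i=1}^{n} i\binom{n-1}{n-i},
\]
which is the claimed expression.

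The only non-routine point is the identification of $\CN(\xi)\cap\CBZERO$: I must argue both inclusions. One direction (no unexpected neighbors) follows directly from property~\eqref{eq_prop}, which rules out all $x+x'$ with $x'\notin\{0,b_2,\ldots,b_n\}$. The other direction (every $x+b_j$ with $j\notin S$, and $x$ itself, is actually a neighbor) relies on the first clause of~\eqref{eq_prop} together with the trivial fact that $\xi-x=b_1$ is a minimum-norm vector of $A_n$ and hence a relevant Voronoi vector. This is the main, and only slightly delicate, step; everything else is bookkeeping with binomial coefficients.
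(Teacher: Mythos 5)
Your proposal is correct and follows essentially the same route as the paper: both reduce to Lemma~\ref{lem_simplex}, use property~\eqref{eq_prop} to identify $\CN(x+b_1)\cap\CBZERO$ for each corner (your parametrization of $\CBONE$ by subsets $S\subseteq\{2,\ldots,n\}$ is just the paper's walk over $\CBZERO$ in different clothing), and conclude with the same binomial count $\sum_{k=0}^{n-1}(n-k)\binom{n-1}{k}$ and reindexing $i=n-k$. Your explicit verification that $x+b_j\in\CBZERO$ iff $j\notin S$, and that $x$ itself is a neighbor because $b_1$ is a minimal (hence relevant Voronoi) vector, makes precise what the paper states more informally ("if we add again $b_{j_1}$, the resulting point is outside of $\PPB$"), but it is the same argument.
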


\begin{proof}
From Lemma~\ref{lem_simplex},
what remains to be done is to count the number of $i$-simplices.
We walk in $\CBZERO$ and for each of the $2^{n-1}$ points $x \in \CBZERO$ 
we investigate the dimensionality of the simplex where the top corner is $x + b_1 \in \CBONE$. 
This is achieved by counting the number of elements in $\CN(x+~b_1)~\cap~\CBZERO$, 
via the property given by (\ref{eq_prop}). 
Starting from the origin, one can form a $n$-simplex with $0$, $b_1$, and the $n-1$ other basis vectors. 
Then, from any $b_{j_1}$, $2 \leq j_1\leq n$, one can only add the $n-1$ remaining basis vectors to generate a simplex in $\PPB$. 
Indeed, if we add again~$b_{j_1}$, the resulting point is outside of $\PPB$. 
Hence, we get a $n-1$-simplex and there are $\binom{n-1}{1}$
ways to choose $b_{j_1}$: any basis vectors except~$b_1$. 
Similarly, if one starts the simplex from $b_{j_1}+b_{j_2}$, $\j_1 \neq j_2$, one can form a $n-2$-simplex in $\PPB$ and there are $\binom{n-1}{2}$ ways to choose $b_{j_1}+b_{j_2}$. 
In general, there are $\binom{n-1}{k}$ ways to form a $n-k$-simplex.
Applying the previous lemma and summing over $k=n-i=0 \ldots n-1$ gives
the announced result.
\end{proof}


\subsection{Decoding via folding}
\label{sec_folding_An}
Obviously, at a location $\tilde{y}$,
we do not want to compute all affine pieces in (\ref{eq_boundary_funct})
whose number is given by (\ref{eq_An_pieces}) in order to evaluate $f$.
To reduce the complexity of this evaluation,
the idea is to exploit the symmetries of $f$
by ``folding" the function and mapping distinct regions
of the input domain to the same location.
If folding is applied sequentially, i.e. fold a region that has already been folded,
it is easily seen that the gain becomes exponential. 
The notion of folding the input space in the context of neural networks
was introduced in \cite{Montufar2014}.

Given the basis orientation as in~Theorem~\ref{th_func_VR},
the projection of $b_j$ on $\mathcal{D}$ is $b_j$ itself, for $j\ge 2$. 
We also denote the bisector hyperplane
between two vectors $b_j, b_k$ by $BH(b_j, b_k)$
and its normal vector is taken to be $v_{j,k}=~b_j-b_k$. 
We define the folding transformation $F:\D \rightarrow \D'$ as follows:  
let $\tilde{y} \in \D$, for all $2\le j < k \le~n$,
compute $\tilde{y} \cdot v_{j,k}$ (the first coordinate of $v_{j,k}$ is zero).
If the scalar product is non-positive, replace $\tilde{y}$
by its mirror image with respect to $BH(b_j, b_k)$.
There exist $(n-1)(n-2)/2$ hyperplanes for mirroring. 

\begin{theorem}
\label{theo_An_linear}
Let us consider the lattice $A_n$ defined by the Gram matrix~\eqref{eq_basis_An}. 
We have (i) $\D' \subset \D$, (ii) for all $\tilde{y} \in \D$, $f(\tilde{y}) = f(F(\tilde{y}))$ and (iii) $f$ has exactly 
\vspace{-1.5mm}
\begin{equation}
2n+1
\vspace{-1.5mm}
\end {equation}
pieces on $\D'$.
This is to be compared with \eqref{eq_An_pieces}.
\end{theorem}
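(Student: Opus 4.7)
The plan treats the three parts in order by exploiting the $S_{n-1}$-symmetry of $A_n$. For (i) and (ii), I would first verify that the reflection $R_{jk}$ across $BH(b_j, b_k)$ simply exchanges $b_j \leftrightarrow b_k$ and fixes every other basis vector: this follows from $b_i \cdot (b_j - b_k) = (1 + \delta_{ij}) - (1 + \delta_{ik}) = 0$ for all $i \notin \{j, k\}$ (including $i = 1$), which is immediate from $\Gamma = J_n + I_n$. Since $R_{jk}$ also preserves $e_1$ --- because $b_j, b_k \in \{y \cdot e_1 = 0\}$ under the orientation of Theorem~\ref{th_func_VR} --- it restricts to a reflection on $\R^{n-1}$ that swaps $\tilde{b}_j$ and $\tilde{b}_k$ among the generators of the zonotope $\D = \sum_{i=1}^n [0,1]\tilde{b}_i$, where $\tilde{b}_i$ denotes the projection of $b_i$ (so $\tilde{b}_i = b_i$ for $i \ge 2$). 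Consequently $\D$, $\PPB$, $\CBZERO$, $\CBONE$ (since the $z_1$-coordinate is untouched), and every Voronoi cell are invariant under $R_{jk}$. This yields (i) because $F$, being a composition of such reflections, satisfies $F(\D) \subseteq \D$; and (ii) because the decision boundary --- a union of facets $\mathcal{V}(x) \cap \mathcal{V}(\lambda)$ with $x \in \CBONE$ and $\lambda \in \CBZERO$ --- is $S_{n-1}$-invariant, so $f \circ R_{jk} = f$ and hence $f = f \circ F$. Because the conditional reflections in $F$ iteratively sort the coefficients $\alpha_i$ in the expansion $\tilde{y} = \sum_i \alpha_i \tilde{b}_i$, the image $\D'$ coincides with the fundamental chamber $\{\tilde{y} \in \D : \alpha_2 \ge \alpha_3 \ge \cdots \ge \alpha_n\}$ of $S_{n-1}$.

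For (iii), parts (i) and (ii) reduce the problem to counting $S_{n-1}$-orbits of pieces of $f$ on $\D$. By Lemma~\ref{lem_simplex}, each piece is a facet of a simplex whose top corner $x = b_1 + \sum_{j \in J} b_j$ is classified by $k = |J| \in \{0, 1, \ldots, n-1\}$, giving $n$ orbit representatives. For a fixed representative, the simplex has a single plateau facet (between $x$ and $x - b_1$) together with $n - 1 - k$ non-plateau facets (between $x$ and $x - b_1 + b_{j'}$ for $j' \notin J \cup \{1\}$); the stabilizer $S_k \times S_{n-1-k}$ fixes the plateau and acts transitively on the non-plateau facets, so generically each weight class contributes one plateau-orbit and one non-plateau-orbit. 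Aggregating over the $n$ weight classes, accounting for the degenerate extremal cases $k = 0$ and $k = n-1$ as well as for the Gram-matrix coincidence that a single hyperplane $y \cdot (b_1 - b_j) = 0$ supports many facet-pairs simultaneously, the total should land at $2n + 1$.

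The main obstacle is (iii). One must verify that each purported orbit actually furnishes a nonempty piece on $\D'$ --- i.e., that no advertised facet is fully eclipsed by a competitor from a neighboring simplex --- and that the non-generic coincidences built into $\Gamma = J_n + I_n$ do not fuse orbits which should remain geometrically distinct. Parts (i) and (ii) are essentially formal consequences of the $S_{n-1}$-symmetry encoded in the Gram matrix.
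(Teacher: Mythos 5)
Your parts (i) and (ii) are correct and are essentially the paper's own argument, fleshed out with welcome detail: the computation $b_i\cdot(b_j-b_k)=0$ for $i\notin\{j,k\}$ (valid also for $i=1$) is exactly why the paper can assert that reflecting in $BH(b_j,b_k)$ merely ``switches $b_j$ and $b_k$'' while preserving $e_1$, $\D$, $\CBZERO$, $\CBONE$ and the Voronoi cells, whence $f\circ F=f$ and $F(\D)\subseteq\D$. Your identification of $\D'$ with the fundamental chamber $\{\alpha_2\ge\alpha_3\ge\cdots\ge\alpha_n\}$ is also correct (a single lexicographic pass of the conditional reflections max-sorts the coefficients, since $\tilde{y}\cdot v_{j,k}=\alpha_j-\alpha_k$) and is implicitly what the paper's proof of (iii) uses when it ``walks'' in $\CBZERO$.

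The genuine gap is in (iii), which you yourself flag as the main obstacle: you never complete the count, and the appeal to ``degenerate extremal cases'' and ``Gram-matrix coincidences'' to make ``the total land at $2n+1$'' is unsubstantiated --- indeed unsubstantiable, because a correct execution of your own orbit count gives a different number. Each weight class $k\in\{0,\dots,n-2\}$ contributes exactly one plateau orbit and one slanted orbit (the stabilizer $S_k\times S_{n-1-k}$ is transitive on the $n-1-k$ non-plateau facets, and inside the chamber only the transition from $x=b_1+\sum_{j\in J}b_j$ to $x-b_1+b_{k+2}$ survives, every other $b_{j'}$ being on the wrong side of $BH(b_{k+2},b_{j'})$, which is how the paper eliminates neighbors via (\ref{eq_prop})), while the top class $k=n-1$ contributes only its plateau. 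The total is $2(n-1)+1=2n-1$, which is precisely what the paper's proof concludes (``so we get $2(n-1)+1$ pieces'') and what both the $A_3$ example, $f_{\D'}=\bigl[h_{p1}\vee h_1\bigr]\wedge\bigl[h_{p2}\vee h_2\bigr]\wedge\bigl[h_{p3}\bigr]$ with five pieces, and the general folded expression in Appendix~\ref{terms_An} ($n-1$ two-piece convex blocks plus one plateau) confirm; the figure $2n+1$ in the theorem statement is inconsistent with the paper's own proof and examples (an apparent slip, immaterial to the $\mathcal{O}(n)$ conclusion). By contorting your count toward the stated constant instead of trusting it, you papered over this discrepancy rather than resolving it; to close (iii) rigorously along your lines, you need only the chamber-restricted neighbor elimination just described --- e.g.\ from the origin only the $2$-simplex $\{0,b_1,b_2\}$ survives because every $b_j$, $j\ge3$, lies on the wrong side of $BH(b_2,b_j)$ --- together with the verification (absent from your sketch, and admittedly terse in the paper too) that each surviving facet meets the open chamber in a full-dimensional piece, so no orbit is eclipsed or fused.
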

\vspace{-1mm}
See Appendix~\ref{App_theo_An_linear} for the proof.
\vspace{-1mm}
\begin{example}[Continued]
\label{ex_A3_2}
The function $f$ restricted to $\D'$ (i.e. the function to evaluate after folding), say $f_{\D'} $, is
\begin{align}
\label{eq_A3}
f_{\D'} = \Big[ h_{p1} \vee h_{1} \Big] \wedge \Big[  h_{p2} \vee h_{2}  \Big] \wedge \Big[ h_{p3} \Big]. 
\end{align}
The general expression of $f_{\D'} $ for any dimension is available in Appendix~\ref{terms_An}.
\end{example}

\subsection{From folding to a deep ReLU network}
\label{sec_folding_relu}
For sake of simplicity and without loss of generality, in addition to the standard ReLU activation function ReLU$(a)=\max(0, a)$, we also allow the function $\max(0,-a)$ and the identity as activation functions in the network.

To implement a reflection, one can use the following strategy. 
Step~1: rotate the axes to have the $i$-th axis $e_i$ perpendicular to the reflection hyperplane and shift the point (i.e. the $i$-th coordinate) to have the reflection hyperplane at the origin.
Step~2: take the absolute value of the $i$-th coordinate. 
Step~3: do the inverse operation of step 1.

Now consider the ReLU network illustrated in Figure~\ref{fig_relu_ref}. 
The edges between the input layer and the hidden layer represent the rotation matrix, where the $i$-th column is repeated twice, and $p$ is a bias applied on the $i$-th coordinate.
Within the dashed square, the absolute value of the $i$-th coordinate is computed
and shifted by $-p$.
Finally, the edges between the hidden layer and the output layer represent the inverse rotation matrix. 
This ReLU network computes a reflection. We call it a reflection block.

All reflections can be naively implemented by a simple
concatenation
of reflection blocks. 
This leads to a very deep 
\newpage
\noindent
and narrow network
of depth $\mathcal{O}(n^2)$ and width $\mathcal{O}(n)$.
\begin{figure}
    \centering
    \includegraphics[scale=0.75]{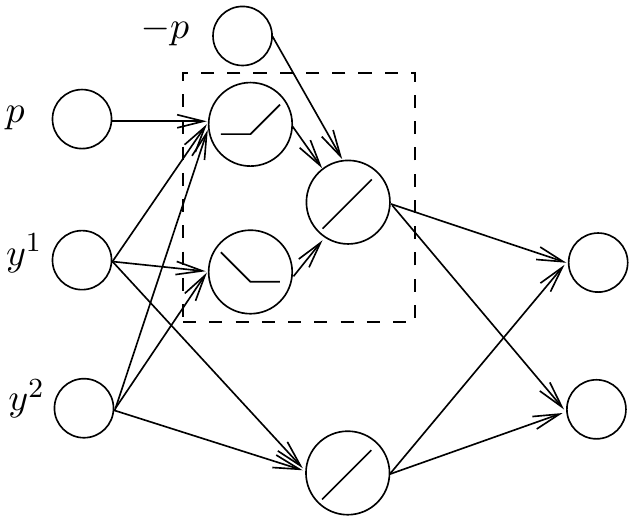}
     \caption{Reflection ReLU network (called reflection block).}
     \label{fig_relu_ref}
	\vspace{-4mm}
\end{figure}

Regarding the $2n+1$ remaining pieces after folding, we have two options 
(in both cases, the number of operations involved is negligible compared to the previous folding operations). 
To directly discriminate the point with respect to $f$, 
we implement the HLD on these remaining pieces with two additional hidden layers 
(see e.g. Figure~2 in \cite{Corlay2018}): 
project $y_{folded}$ on the $2n+1$ hyperplanes (with one layer of width $2n +1$) 
and compute the associated Boolean equation with an additional hidden layer.
If needed, we can alternatively evaluate $f(\tilde{y})$ via $\mathcal{O}(\log(n))$ additional hidden layers. 
First, compute the $n-1$ 2-$\vee$ via two layers of size $\mathcal{O}(n)$ containing several ``max ReLU networks" 
(see e.g. Figure 3 in \cite{Arora2018}). Then, compute the $n$-$\wedge$ via $\mathcal{O}(\log(n))$ layers. 
Note that $f(\tilde{y})$ can also be used for discrimination via the sign of $y_{i}-f(\tilde{y})$. 

The total number of parameters in the whole network is $\mathcal{O}(n^4)$.
In Appendix~\ref{App_number_params}, 
we quickly discuss whether or not this can be improved.

Eventually, the CPV is solved by using $n$ such networks in parallel (this could also be optimized). 
The final network has width $\mathcal{O}(n^2)$ and depth $\mathcal{O}(n^2)$.
\subsection{Decoding via a shallow network}
\label{sec_shallow}
A two-layer ReLU network with $n$ inputs and $w_1$ neurons in the hidden layer can compute a CPWL function with at most $\sum_{i=0}^{n}\binom{w_1}{i}$ pieces \cite{Pascanu2013}. 
This result is easily understood by noticing that the non-differentiable part of $\max(0,a)$ is a $n-2$-dimensional hyperplane that separates two linear regions. 
If one sums $w_1$ functions $\max(0,d_{i} \cdot y)$, where $d_i$, $1\leq i \leq w_1$, is a random vector, one gets $w_1$ of such $n-2$-hyperplanes.  
The rest of the proof consists in counting the number of linear regions that can be generated by these $w_1$ hyperplanes. 
The number provided by the previous formula is attained if and only if the hyperplanes are in general position. 
Clearly, in our situation the $n-2$-hyperplanes partitioning $\D$ are not in general position: the hyperplane arrangement is not $simple$.
The proof of the following theorem, available in Appendix~\ref{App_theo_shallow}, consists in finding a lower bound on the number of such $n-2$-hyperplanes.
\vspace{-1mm}
\begin{theorem}
\label{theo_shallow}
A ReLU network with one hidden layer needs at least 
\vspace{-2mm}
\begin{equation}
\label{eq_An_shallow}
\sum_{i=2}^{n}(i-1) \times \binom{n-1}{n-i}
\end{equation}
neurons to solve the CVP for the lattice $A_n$.
\end{theorem}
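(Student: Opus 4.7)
The plan is to derive a lower bound on $w_{1}$ by counting distinct $(n-2)$-dimensional hyperplanes along which the decision boundary function $f$ is non-differentiable. A hidden ReLU unit $\max(0,d\cdot y - c)$ bends along exactly one such hyperplane $\{d\cdot y = c\}\subset\R^{n-1}$, so the CPWL output of any two-layer network with $w_{1}$ hidden units has its non-smooth locus covered by at most $w_{1}$ such hyperplanes. For the network to realize the decoder for the coordinate $z_{1}$, its kink set must include every $(n-2)$-hyperplane where $f$ bends; hence $w_{1}\ge N(f)$, where $N(f)$ denotes the number of distinct such hyperplanes.

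I would then lower-bound $N(f)$ using the simplex decomposition of Lemma~\ref{lem_simplex}. Each $x\in\CBZERO$ gives rise to a regular $i$-simplex of dimension $i=n-|I_{x}|$, with $\binom{n-1}{n-i}$ simplices per dimension. Over the sub-region of $\D$ associated with a single $i$-simplex, $f$ restricts to a convex CPWL function $\vee_{k=1}^{i}g_{m,k}$ built from the $i$ projected Voronoi facets of the apex. Since these $i$ pieces tile a connected region, the adjacency graph of pieces has at least $i-1$ edges, each supported on its own $(n-2)$-hyperplane; so every $i$-simplex individually contributes at least $i-1$ distinct kink hyperplanes of $f$.

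To pass from this per-simplex bound to the global count $\sum_{i=2}^{n}(i-1)\binom{n-1}{n-i}$, one must establish cross-simplex distinctness of the chosen hyperplanes. A natural candidate for the simplex at level $s=|I_{x}|$ is the family of $i-1$ bisectors between the base vertex $x$ and its siblings $x+b_{j}$, $j\in J_{x}$, projecting in $\D$ to $\tilde y\cdot\tilde b_{j}=(s+1)/2$; the affine data encodes both the level $s$ and the direction index $j$. Combining these with the direction-only Type-2 hyperplanes $\tilde y\cdot(\tilde b_{j}-\tilde b_{k})=0$ arising from pairs of non-$x$ base vertices, and using the algebraic identity
\[
\sum_{i=2}^{n}(i-1)\binom{n-1}{n-i}=\sum_{i=1}^{n}i\binom{n-1}{n-i}-2^{n-1}
\]
(pieces minus simplices from Theorem~\ref{theo_nbReg_Lin}), would yield the announced lower bound on $N(f)$ and hence on $w_{1}$.

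The main obstacle is precisely this cross-simplex distinctness step. A naive per-simplex sum over-counts, because several simplices at the same level $s$ that share a direction $j$ produce the identical Type-1 hyperplane, and the Type-2 hyperplanes are common to every simplex containing both $b_{j}$ and $b_{k}$ in its base. The delicate combinatorial core of the proof, deferred to Appendix~\ref{App_theo_shallow}, is a careful ownership argument that attributes enough fresh hyperplanes to each simplex — using the specific incidence structure of $A_{n}$ and the Type-1/Type-2 level bookkeeping sketched above — so that the sum over simplices is indeed a valid lower bound on the total number of distinct $(n-2)$-hyperplanes in the kink set of $f$.
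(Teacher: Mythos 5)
Your proposal follows the same route as the paper's proof --- the same premise (a one-hidden-layer network produces at most $w_1$ distinct kink hyperplanes, so $w_1$ is lower-bounded by the number of distinct $(n-2)$-hyperplanes supporting the non-differentiability of $f$), the same use of the simplex decomposition behind Lemma~\ref{lem_simplex} and Theorem~\ref{theo_nbReg_Lin} with $i-1$ kinks attributed to each $i$-simplex, and the same final sum --- but it stops exactly at the step that constitutes the theorem. Announcing that ``a careful ownership argument'' will be ``deferred to the Appendix'' is not a proof: the cross-simplex distinctness claim is the entire content of the lower bound, and you never construct it. The paper does supply this step, in a form different from your Type-1/Type-2 bookkeeping: it decomposes each $i$-simplex into $i-1$ two-dimensional simplices of the special form $\{x_1,\, x_1+b_1,\, x_1+b_j\}$ with $x_1, x_1+b_j \in \CBZERO$ (so every counted kink is the intersection of the two boundary facets of such a $2$-simplex, one of which involves $b_1$), observes that all these $2$-simplices are congruent, and argues that two of them can produce the same $(n-2)$-hyperplane only if one is carried to the other by a translation orthogonal to its $2$-face, which the paper then rules out by asserting that the allowed translations are along basis vectors and none of these is orthogonal to the $2$-faces.

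Moreover, your own bookkeeping indicates that the ownership scheme you sketch cannot succeed in the form you describe. With the Gram matrix \eqref{eq_basis_An}, the bisector of $x$ and $x+b_j$ (where $x$ is a sum of $s$ basis vectors not including $b_1,b_j$) is $\{y : y\cdot b_j = s+1\}$, which projects to the domain hyperplane $\{\tilde{y} : \tilde{y}\cdot \tilde{b}_j = s+1\}$ depending only on the pair $(j,s)$; your Type-2 hyperplanes $\{\tilde{y}\cdot(\tilde{b}_j-\tilde{b}_k)=0\}$ do not depend on $x$ at all. These two families together contain only $\mathcal{O}(n^2)$ distinct hyperplanes, so no attribution rule operating on them can certify the $\Omega(2^n)$ pairwise-distinct hyperplanes demanded by \eqref{eq_An_shallow}. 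Concretely, for $n\ge 4$ the $2$-simplices $\{b_2,\, b_1+b_2,\, b_2+b_4\}$ and $\{b_3,\, b_1+b_3,\, b_3+b_4\}$ differ by the translation $b_3-b_2$, which under \eqref{eq_basis_An} satisfies $(b_3-b_2)\cdot b_1 = (b_3-b_2)\cdot b_4 = 0$, i.e.\ is orthogonal to the shared $2$-face, so their kink loci lie in the \emph{same} hyperplane $\{\tilde{y}\cdot\tilde{b}_4 = 2\}$. This is precisely the configuration the paper's translation argument must exclude (its assertion that the relevant translations are only along single basis vectors is what your collision observation puts under pressure, since differences $b_k-b_{k'}$ also map counted simplices to counted simplices); whatever one concludes about that step of the paper, your proposal does not contain any resolution of it, so the stated bound on $w_1$ does not follow from what you wrote.
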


\section{Conclusions}
\vspace{1mm}
We recently applied this theory to a SVR basis of lattices $E_n$, $6~\leq~n~\leq 8$. 
The decision boundary function has a number of pieces equal to

\footnotesize
\begin{gather*}
\sum_{i=0}^{n-3} \Bigg( \left[ 1+(n-3-i) \right] + 2 \left[ 1+ 2(n-3-i)+ \binom{n-3-i}{2} \right] + \\ 
\left[ 1+ 3(n-3-i) +   3\binom{n-3-i}{2} +  \binom{n-3-i}{3} \right] \Bigg)\binom{n-3}{n-i}-3.
\end{gather*}
\normalsize
a number which we successfully linearized via folding.

From a learning perspective, our findings suggest that many optimal decoders may be contained in the function class $\Phi$ of deep ReLU networks. Learnability results of the restricted model \cite{Anthony1999}\cite{Bartlett2017} show that the sample complexity is then $m_{\Phi}(\epsilon,\delta)=\mathcal{O}(WL \log(W)/\epsilon)$, avoiding the $1/\epsilon^2$ of the general model (where $W$ is the number of parameters in the network and $L$ the number of layers).

Additionally, the folding approach suits very well the non-uniform finite sample bounds of the information bottleneck framework \cite{Shamir2010}. Indeed, if we model the input of the network and its $i$-th layer after the $i$-th reflection block by random variables $Y$ and $Y_i$, clearly $I(Y;Y_i)$ is reduced compared to $I(Y;Y_{i-1})$, for any distribution of $Y$.

\vspace{4mm}

\clearpage



\section{Appendix}

\subsection{First order terms of the decision boundary function before and after folding for $A_n$}
\label{terms_An}

\subsubsection{Before folding}

\begin{align*}
f^{n=2} = \Big[   h_{p1} \vee h_{1}  \Big] \wedge \Big[ h_{p2} \Big].
\end{align*}
\begin{align*}
f^{n=3}= & \Big[   h_{p1} \vee h_{1} \vee h_{2}  \Big] \ \wedge \\
& \Big[ \left( h_{p2} \vee h_{1}\right)  \wedge \left( h_{p2}\vee h_{2} \right)  \Big] \ \wedge \\
& \Big[   h_{p3} \Big]. 
\end{align*}
\begin{align*}
f^{n=4} = & \Big[ h_{p1} \vee h_{1} \vee h_{2} \vee h_{3} \Big] \wedge \\
            & \Big[ \left( h_{p2}  \vee h_{1} \vee h_{2}\right) \wedge \left( h_{p2} \vee h_{2} \vee h_{3} \right) \Big] \wedge \\ 
            & \left( h_{p2} \vee h_{1} \vee h_{3} \right)\Big] \wedge \\
            & \Big[ \left( h_{p3} \vee h_{1} \right) \wedge\left( h_{p3} \vee h_{2}\right) \wedge \left( h_{p3} \vee h_{3}\right)\Big] \wedge \\
            &  \Big[ h_{p4}\Big].
\end{align*}
\subsubsection{After folding, $\forall n \ge 2$}
\begin{align*}
f^n_{\D'} = & \Big[  h_{p1} \vee h_{1}  \Big] \wedge \Big[ h_{p2} \vee h_{2} \Big] \wedge ... \ \wedge   \\
          & \Big[  h_{p (n-1)} \vee h_{n-1} \Big] \wedge \Big[ h_{p (n)} \Big].
\end{align*}

\subsection{Proof of Theorem~\ref{th_func_VR}~$\&$~\ref{th_func_VR_2}}
\label{App_theo_f_CPWL}

Note that the assumptions of Theorem~\ref{th_func_VR_2} are more general than the ones of Theorem~\ref{th_func_VR}:
the orientation of the axes chosen for Theorem~\ref{th_func_VR} always satisfies: $x_1 > \lambda_1$, $\forall x \in \CBONE$ and $\forall \lambda \in \CN(x) \cap \CBZERO$ (if $b_1^1$ is negative, the equivalent assumption is: $x_1 < \lambda_1$, $\forall x \in \CBONE$ and $\forall \lambda \in \CN(x) \cap \CBZERO$).
Indeed, with this orientation, any point in $\CBZERO$ is in the hyperplane $\{y \in \R^n : \ y \cdot e_1 =0\}$ and has its first coordinate equal to 0. 
As a result, the proof of Theorem~\ref{th_func_VR_2} (below) also proves Theorem~\ref{th_func_VR}.

\begin{proof}
All Voronoi facets of $f$ belonging to a same point of $\CBONE$ form a polytope.
The variables within a AND condition of the HLD discriminate a point with respect to the boundary hyperplanes where these facets lie: 
the condition is true if the point is on the proper side of all these facets.
For a given point $y \in \PPB$, we write a AND condition $m$ as sign($yA_m + q_m)\succ 0$, where $A_m \in \mathbb{R}^{ n \times l_m }$, $ q_m \in \R^{l_m}$.
Does this convex polyhedron lead to a convex CPWL function?

Consider Equation~$\eqref{eq_bool}$. 
The direction of any $v_j$ is chosen so that the Boolean variable is true for the point in $\CBONE$ whose Voronoi facet is in the corresponding boundary hyperplane.
Obviously, there is a boundary hyperplane, which we name $\psi$, between the lattice point $0 \in \CBZERO$ and $b_1 \in \CBONE$. 
This is also true for any $x \in \CBZERO$ and $x + b_1 \in \CBONE$.
Now, assume that one of the vector $v_j$ has its first coordinate $v^1_j$ negative.
It implies that for a given location $\tilde{y}$, if one increases $y_1$ the term $y \cdot v_{j}^{T} - \bias_{j}$ decreases and eventually becomes negative if it was positive.
Note that the Voronoi facet corresponding to this $v_j$  is necessarily above $\psi$, with respect to the first axis $e_1$, as the Voronoi cell is convex.
It means that there exists $\tilde{y}$ where one can do as follows. 
For a given $y_1$ small enough, $y$ is in the decoding region $z_1=0$.
If one increases this value, $y$ will cross $\psi$ and be in the decoding region  $z_1=1$. 
If one keeps increasing the value of $y_1$, $y$ eventually crosses the second hyperplane and is back in the region $z_1=0$.
In this case $f$ has two different values at the location $\tilde{y}$ and it is not a function.
If no $v^1_j$ is negative, this situation is not possible.
All $v^1_j$ are positive if and only if all  $x \in \CBONE$ have their first coordinates $x_1$ larger than the first coordinates of all $\CN(x) \cap \CBZERO$.
Hence, the convex polytope leads to a function if and only if this condition is respected.
If this is the case, we can write sign($yA_m + q)\succ 0 \Leftrightarrow \wedge_{k=1}^{l_m} y \cdot a_{m,k} + q_{m,k} > 0$, $a_{m,k},q_{m,k} \in \{v_j, \bias_j\}$. 
We want $y_{1}>g_{m,k}(\tilde{y})$, for all $1\leq k \leq l_m$, which is achieved if $y_{1}$ is greater than the maximum of all values. 
The maximum value at a location $\tilde{y}$ is the active piece in this convex region and we get $y_{1} = \vee_{k=1}^{l_m} g_{m,k}(\tilde{y})$. 

A Voronoi facet of a neighboring Voronoi cell is concave with the facets of the other Voronoi cell it intersects.
The region of $f$ formed by Voronoi facets belonging to distinct points in $\CBONE$ form concave regions that are linked by a OR condition in the HLD. The condition is true if $y$ is in the Voronoi region of at least one point of $\CBONE$: $\vee_{m=1}^{M} \{ \wedge_{k=1}^{l_m} y \cdot a_{m,k} + q_{m,k} \}> 0$. We get $f(\tilde{y}) = \wedge_{m=1}^{M}\{\vee_{k=1}^{l_m}g_{m,k}(\tilde{y}) \}$.

Finally, $l_m$ is strictly inferior to~$\tau_{f}$ because all Voronoi facets lying in the affine function of a convex part of $f$ are facets of the same corner point. 
Regarding the bound on $M$, the number of logical OR term is upper bounded by half of the number of corner of $\PPB$ which is equal to $2^{n-1}$. 
\end{proof}

\subsection{Proof of Theorem~\ref{theo_An}}
\label{App_theo_VR_An}

\begin{proof}
We need to show that none of $\mathcal{V}(x)$, $x \in \Lambda \backslash \mathcal{C}_{\PPB}$, crosses a facet of $\PPBbar$ (the closure of $\PPB$).
In this scope, we first find the closest points to a facet and show that its Voronoi region do not cross $\PPBbar$.
It is sufficient to proof the result for one facet of $\PPBbar$ has the landscape is the same for all of them.

Let $HF_1$ denote the hyperplane defined by $\mathcal{B} \backslash b_1$, where the facet $\mathcal{F}_{1}$ of $\PPBbar$ lies.
While $b_{1}$ is in $\PPBbar$ it is clear that $-b_1$ is not in $\PPBbar$. 
Adding to $-b_1$ any linear combination of the $n-1$ vectors generating $\mathcal{F}_{1}$ is equivalent to moving in a hyperplane, 
say $HP_1$, parallel to $\mathcal{F}_{1}$ and it does not change the distance from $HF_1$. 
Additionally, it is clear that any integer multiplication of $-b_{1}$ results in a point which is further from the hyperplane (except by $\pm 1$ of course). 
Note however that the orthogonal projection of $-b_1$ onto $HF_1$ is not in $\mathcal{F}_{1}$. The only lattice point in $HP_1$ having this property is obtained by adding all $b_j$, $2 \leq j \leq n $, to $-b_1$, i.e. the point $-b_1 + \sum_{j=2}^n b_j$.

This closest point to $\PPBbar$, along with the points $\mathcal{B} \backslash b_1$, form a regular simplex. Hence, the hole of the Voronoi region of interest is the centroid of this regular simplex (it is not a deep hole of $A_n$ for $n \geq 3$). It is located at a distance of  $\alpha/(n+1)$, $\alpha>0$,  to the center of any facet of the simplex and thus to $\mathcal{F}_1$ as well as to $\PPBbar$.
\end{proof}
\subsection{Proof of Theorem~\ref{theo_An_linear}}
\label{App_theo_An_linear}

\begin{proof}
To prove (i) we  use the fact that $BH(b_j,b_k)$, $2\le  j < k \le~n$,
is orthogonal to  $\D$, then the image of $\tilde{y}$  via the folding
$F$ is  in $\D$.   

(ii) is the  direct result of  the symmetries  in the
$A_n$  basis where  the  $n$ vectors  form  a regular  $n$-dimensional
simplex.  The folding  via $BH(b_j, b_k)$ switches $b_j$  and $b_k$ in
the hyperplane containing $\D$ and  orthogonal to $e_1$. Switching $b_j$
and $b_k$ does  not change the decision boundary because  of the basis
symmetry, hence  $f$ is unchanged.

Now, for (iii), how  many pieces are left after all reflections?
Similarly to the proof of Theorem~\ref{theo_nbReg_Lin}, 
we walk in $\CBZERO$ and for a given point $x \in \CBZERO$ we investigate the dimensionality of the simplex 
where the top corner is $x + b_1 \in \CBONE$.
This is achieved by counting the number of elements in $\CN(x+b_1)~\cap~\CBZERO$ (via Equation~\ref{eq_prop}) 
that are on the proper side of all bisector hyperplanes.
Starting from the origin, one can only form a $2$-simplex with with $0$, $b_1$, and $b_2$: 
any other point $b_{j}$, $j \ge 3$, is on the  other  side  of  the  the bisector  hyperplanes $BH(b_2, b_j)$.
Hence, the lattice  point $b_1$, which had
$n$ neighbors in $\CBZERO$ before folding, only has  2 now. 
$f$ has only two pieces around $b_1$ instead of $n$.
Then, from  $b_{2}$ one can add $b_{3}$ but no other for
the same reason. 
The point $b_2+b_1$ has only 2  neighbors in  $\CBZERO$.
The pattern replicates until the last corner reaching
$b_1+b_2+\ldots+b_n$ which has only one neighbor.
So we get $2(n-1)+1$ pieces.
\end{proof}
\subsection{Proof of Theorem~\ref{theo_shallow}}
\label{App_theo_shallow}

The proof below complements the discussion of Section~\ref{sec_shallow}: we provide a lower bound on the number of { \em distinct } $n-2$-hyperplanes 
(or more accurately the $n-2$-faces located in $n-2$-hyperplanes) partitioning $\D$. 
Note that these $n-2$-faces are the projections in $\D$ of the 
$n-2$-dimensional intersections of the affine pieces of $f$.
\begin{proof}
We show that many intersections between two affine pieces linked by a $\vee$ operator (i.e. an intersection of affine pieces within a convex region of $f$) are located in distinct $n-2$-hyperplanes.
To prove it, consider all $2$-simplices in $\PPB$ of the form $\{x_1, x_1 + b_1, x_1+ b_j\}$, $x_1 \in \CBZERO$, $x_1+ b_j \in \CBZERO$.
The decision boundary function of any of these simplices has 2 pieces and their intersection is a $n-2$-hyperplane.   
Take a simplex among these simplices, say $\{0,b_1, b_2\}$.  Any other simplex is obtained by a composition of reflections and translations from this simplex. 
For two $n-2$-hyperplanes to be the same, a second simplex should be obtained from the first one by a translation along a vector orthogonal to the $2$-face of this first simplex (i.e. a vector parallel to the $n-2$-hyperplane). 
However, the allowed translations are only in the the direction of a basis vector. None of them is orthogonal to one of these simplices.  
 
Finally, note that any $i$-simplex encountered in the proof of 
Theorem~\ref{theo_nbReg_Lin} can be decomposed into $i-1$ of such 2-simplex. 
 Hence, from the proof of Theorem~\ref{theo_nbReg_Lin}, 
we get that the number of this category of 2-simplex, and thus a lower bound on the number of $n-2$-hyperplanes, 
is $\sum_{k=0}^{n-1}(n-1-k)~\binom{n-1}{k}$. Summing over $k=n-i=0 \ldots n-1$ gives
the announced result.
\end{proof}

\subsection{The function of \cite{Montufar2014} from a lattice perspective}
\label{sec_montu}


The function constructed in \cite{Montufar2014} can be simply described as follows. 
Consider a typical random CPWL function $f: \R^{n-1} \rightarrow \R$ in the cube $[0,1]^{n}$, having a number of pieces meeting the bound mentioned in Section~\ref{sec_shallow}.
Take the mirror image of this cube with respect to all $2^{n-1}-1$ possible combinations of hyperplanes $\{y \in \R^n : \ y \cdot e_i =1\}$, $2 \leq i \leq n$. Finally, tile a subset of $\R^n$ in the direction of all axes, except $e_1$, with this fundamental block of $2^{n-1}$ cubes. 

This is the same idea as Construction A \cite{Conway1999} except that the function within the cube $[0,1]^{n}$ is different than the one obtained with a code (e.g. see Figure~\ref{fig_parity}). 
In \cite[chap. 20]{Conway1999}, they present a method to find the equivalent position of any point in $\R^n$ in the cube $[0,1]^{n}$: 
first, perform a mod 2 operation on each scalar. 
If the resulting point is still outside of the cube, perform the needed reflections with respect to the sides of the cube (i.e. if $1<y_i<2$, replace $y_i$ by $2-y_i$). 

\begin{figure}
     \includegraphics[width=0.8\linewidth]{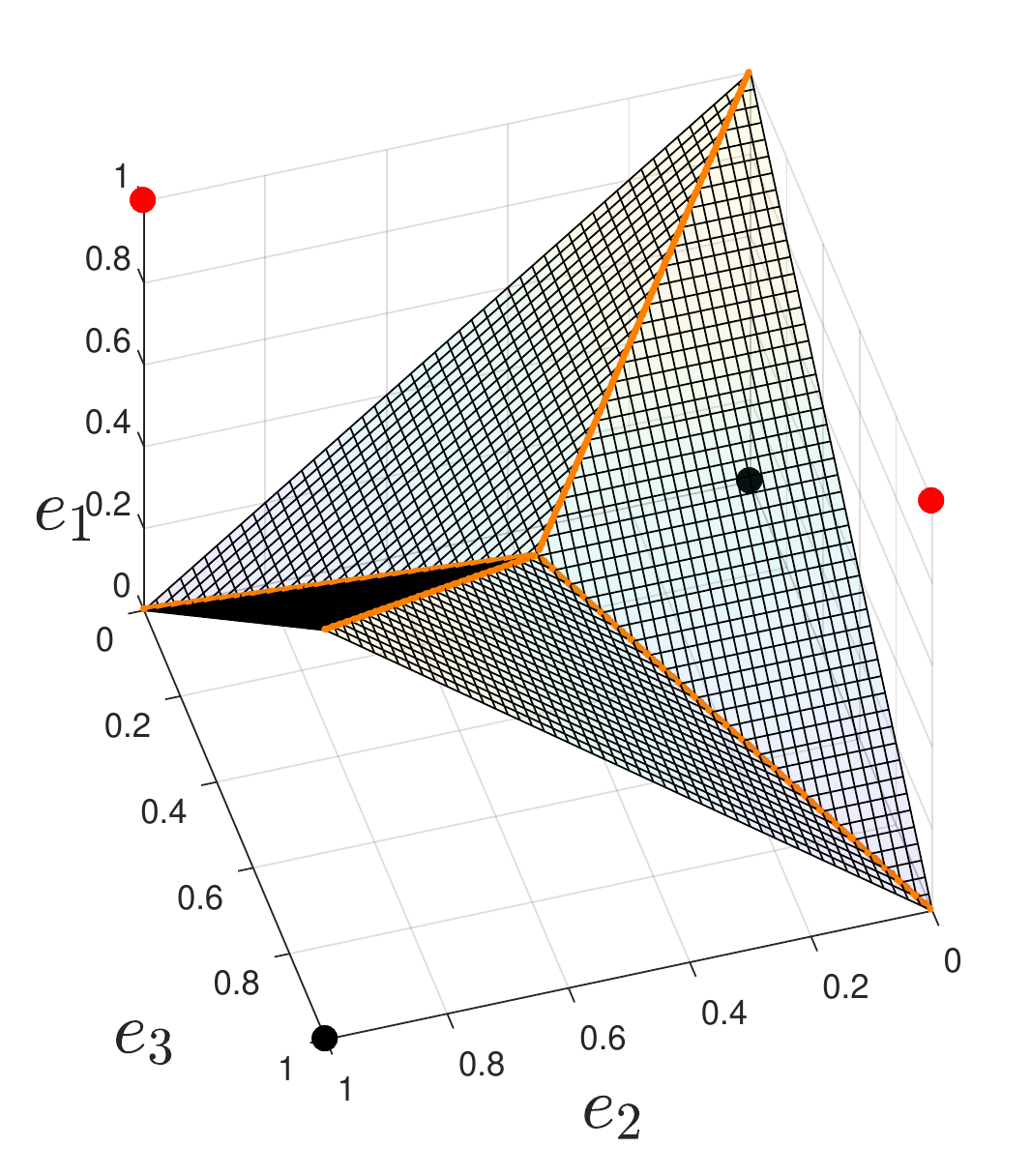}
     \caption{Decision boundary function for the parity check code used to obtain $D_3$ via Construction A.}
     \label{fig_parity}
 \end{figure}

Hence, to compute the function of \cite{Montufar2014} (neglecting the last layer of their neural network) 
one should first perform a mod 2 operation on each axis and then perform a reflection 
along the necessary axes to end up with a point in the cube $[0,1]^{n}$. 
This essentially amounts to computing the sawtooth function (see Figure \ref{fig_sawtooth}) on each axis 
in order to implement the mod 2 operation.

This function can be well approximated via the highest harmonics of its Fourier series, i.e. by summing sines, or simply by rotating a non-symmetric triangle wave function. 
Therefore, the main argument of their proof 
(this function is used to prove a lower bound on the maximum number of linear regions that can be achieved by a deep ReLU network) 
lies on the fact that instead of summing $p$ sigmoids to create $p/2$ periods of a sine it is more efficient to fold it $\log_{2}(p)$ times similarly to what can be done with a sheet of paper (note that \cite{Telgarsky2016} also uses a similar sawtooth function to compute its bounds).
Even though it is inspiring, it hardly justifies the superiority of deep learning over conventional methods as each axis can be processed independently. 

\begin{figure}
	\centering
     \includegraphics[width=0.7\linewidth]{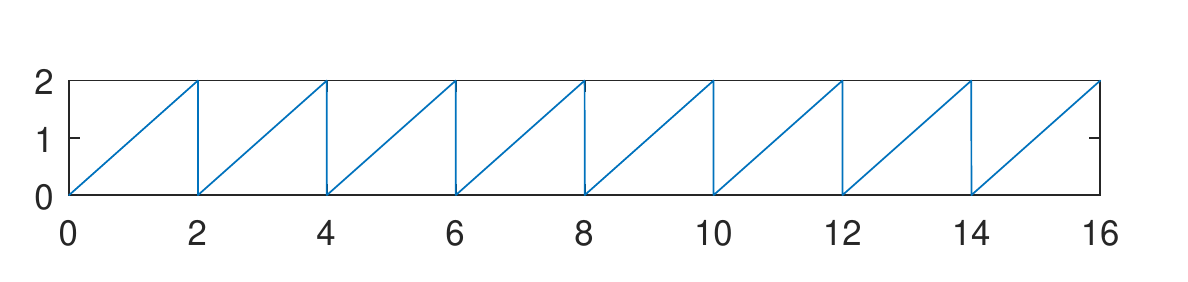}
    \vspace{-2mm}
     \caption{Sawtooth function.}
    \vspace{-3mm}
     \label{fig_sawtooth}
 \end{figure}

In a sense, our work is complementary. 
Indeed, if one is allowed additional layers, we show that even when the ``fundamental" cube is reached, for some functions, one can keep folding.

\subsection{Number of parameters in the deep ReLU network}
\label{App_number_params}

In Section~\ref{sec_folding_relu}, the reflexions are naively implemented 
by a simple concatenation of $\mathcal{O}(n^2)$ reflection blocks.
Can we do better?
The $n-1$ coordinates of $v_{j,k}$ imply $\mathcal{O}(n^2)$ parameters
(i.e. edges) per reflection block.  
But many $v_{j,k}$ can be orthogonal to several axes $e_i$ for some orientations of the basis.

Consider a lower triangular basis with $b_1$ collinear to $e_1$.
Among the sum of all coordinates of all $v_{j,k}$ only $\sum_{j=1}^{n-2}\sum_{k=2}^{j+1} k$ 
coordinates are non-null. 
This means that the depth can be reduced and/or the network is sparse. 
Unfortunately, the previous expression is still $\mathcal{O}(n^3)$
and the number of parameters per reflection block $\mathcal{O}(n^2)$.
The total number of parameters in the network remains $\mathcal{O}(n^4)$.

\end{document}